\newcommand{\nc}{\newcommand}
\nc{\cF}{{\mathcal F}}
\nc{\cM}{{\mathcal M}}
\nc{\cO}{{\mathcal O}}
\nc{\cP}{{\mathcal P}} 
\nc{\cR}{{\mathcal R}} 
\nc{\nat}{\mathbb{N}}
\nc{\lab}{\textit{lab}}
\nc{\true}{``true''\xspace}
\nc{\false}{``false''\xspace}
\newcommand{\tileN}{\lozenge}
\newcommand{\tileY}{\blacklozenge}
\newcommand{\tilen}{\mbox{$\vartriangle$}}\newcommand{\tiley}{\blacktriangle}
\nc{\querycon}{\text{Containment}\xspace}
\nc{\univ}{\textsf{RegUniversality}\xspace}
\newcommand{\set}[1]{\{#1\}}
\newcommand\xleftrightarrow[2][]{  \ext@arrow 9999{\longleftrightarrowfill@}{#1}{#2}}
\newcommand\longleftrightarrowfill@{  \arrowfill@\leftarrow\relbar\rightarrow}
\newcommand{\yit}{\ensuremath{y_{i,t}}\xspace}
\newcommand{\yif}{\ensuremath{y_{i,f}}\xspace}
\newcommand{\yitf}{\ensuremath{y_{i,tf}}\xspace}
\newcommand{\yktf}{\ensuremath{y_{k,tf}}\xspace}
\newcommand{\crpq}{\textup{CRPQ}\xspace}
\newcommand{\crpqs}{\textup{CRPQs}\xspace}
\newcommand{\ans}{ans\xspace}
\newcommand{\defstyle}[1]{\textbf{#1}}
\renewcommand{\epsilon}{\varepsilon}
\newcommand{\ptime}{{\sc Ptime}\xspace}
\nc{\pitwo}{\ensuremath{\Pi^p_2}\xspace}
\nc{\sigmatwo}{\ensuremath{\Sigma^p_2}\xspace}
\newcommand{\np}{{\sc NP}\xspace}
\newcommand{\conp}{{\sc coNP}\xspace}
\newcommand{\pspace}{{\sc PSpace}\xspace}
\newcommand{\expspace}{{\sc Exp\-Space}\xspace}
\newtheorem{remark}[theorem]{Remark}
\newtheorem{corollary}[theorem]{Corollary}
\newcommand{\wimColor}{BrickRed}
\newcommand{\tinaColor}{blue}
\newcommand{\matthiasColor}{ForestGreen}
\newcommand{\tina}[1]{\todo[inline,color=\tinaColor!20]{{\bf Tina:} #1}}
\newcommand{\wim}[1]{\todo[inline,color=\wimColor!20]{{\bf Wim:} #1}}
\newcommand{\matthias}[1]{\todo[inline,color=\matthiasColor!20]{{\bf Matthias:} #1}}
\newlength\boxwidth
\newlength\questionwidth
\newcommand{\decisionproblem}[4]{
    \setlength\boxwidth{#1}{
        \setlength\questionwidth{#1}\addtolength\questionwidth{-2.5cm}{
            \begin{center}
                \fbox{\parbox[t]{\boxwidth}{\centerline{#2}
                        \vspace{1ex}
                        \begin{tabular}{r@{\hspace{5mm}}p{\questionwidth}} 
                            Given: & #3\\[1pt]
                            Question: & #4
                \end{tabular}}}
            \end{center}
}}}
\begin{document}
\title{Containment of\\ Simple Conjunctive Regular Path Queries}

\author{
  Diego Figueira\thanks{Univ.\ Bordeaux, CNRS, Bordeaux INP, LaBRI, UMR 5800, Talence, France}
  \and
  Adwait Godbole\thanks{IIT Bombay, Mumbai, India}
  \and
  S. Krishna\footnotemark[2]
  \and
  Wim Martens\thanks{University of Bayreuth, Bayreuth, Germany}
  \and
  Matthias Niewerth\footnotemark[3]
  \and
  Tina Trautner\footnotemark[3]
}

\maketitle
\begin{abstract}
  Testing containment of queries is a fundamental reasoning task in knowledge
  representation.         We study here the containment problem for
  Conjunctive Regular Path Queries (CRPQs), a navigational query language
  extensively used in ontology and graph database querying. While it is known
  that containment of CRPQs is \expspace-complete in general, we focus here on
  severely restricted fragments, which are known to be highly relevant in
  practice according to several recent studies. We obtain a detailed overview of
  the complexity of the containment problem, depending on the
  features used in the regular expressions of the queries, with completeness
  results for
   \np, \pitwo, \pspace or \expspace.
\end{abstract}

\makeatletter{}

\section{Introduction}

Querying knowledge bases is one of the most important and fundamental tasks in
knowledge representation. Although much of the work on querying knowledge bases
is focused on conjunctive queries, there is often the need to use a simple form
of recursion, such as the one provided by regular path queries (RPQ), which ask
for paths defined by a given regular language. Conjunctive RPQs (CRPQs) can then
be understood as the generalization of conjunctive queries with this form of
recursion. CRPQs are part of SPARQL, the W3C standard for querying RDF data,
including well known knowledge bases such as DBpedia and Wikidata. In
particular, RPQs are quite popular for querying Wikidata. They are used in over 24\% of
the queries (and over 38\% of the unique queries), according to recent
studies~\citep{MalyshevKGGB18,BonifatiMT-www19}.
More generally, CRPQs are basic building blocks for querying
graph-structured databases \citep{Barcelo-pods13}.

As knowledge bases become larger, reasoning about queries (e.g.\ for
optimization) becomes increasingly important. 
One of the most basic reasoning tasks is that of query containment: is every
result of query $Q_1$ also returned by $Q_2$? This can be a means
for query optimization, as it may avoid evaluating parts of a query, or reduce
and simplify the query with an equivalent one. Furthermore, query containment
has proven useful in knowledge base verification, information integration,
integrity checking, and cooperative answering \citep{CalvaneseGLV-kr00}.

The containment problem for CRPQ is \expspace-complete, as was shown by
\citep{CalvaneseGLV-kr00} in a now `classical' KR paper, which appeared 20 years
ago. However, the lower bound construction of \citeauthor{CalvaneseGLV-kr00}
makes use of CRPQs which have a simple shape (if seen as a graph of atoms) but
contain rather involved regular expressions, which do not correspond to RPQs how
they typically occur in practice. Indeed, the analyses of
\citep{BonifatiMT-www19,BonifatiMT-vdlbj20} reveal that a large majority of
regular expressions of queries used in practice are of a very simple form.
This motivates us to revisit CRPQ containment on queries, focusing on commonly
used kinds of regular expressions. Our goal is to identify restricted fragments
of \crpqs that are common in practice and which have a reasonable complexity for
query containment.

\smallskip \noindent \textbf{Contribution.}
According to recent studies on query logs,
investigating over 500 million SPARQL queries
\citep{BonifatiMT-www19,BonifatiMT-vdlbj20}, it turns out that a large majority
of regular expressions that are used for graph navigation are of rather simple
forms, like $a^*$,
$ab^*$, $(a+b)c^*$, $a(b+c)^*d$, i.e., concatenations of (disjunctions of)
single symbols and Kleene stars of (disjunctions of) single symbols. Since CRPQs
have concatenations built-in, CRPQs with such expressions are essentially CRPQs
in which every atom has a regular expression of the form $(a_1 + \cdots + a_n)$ or $(a_1 + \cdots + a_n)^*$ for $n \geq 1$. In the remainder of the paper, we often abbreviate the
former type of atom with $A$ and the latter by $A^*$. If $n=1$, we write $a$ and
$a^*$. Table~\ref{tab:percentages} gives an overview of the frequency of such
expressions in the following data sets:
\begin{enumerate}[(a)]
\item The data set studied by \citep{BielefeldtGK18,BonifatiMT-www19}, which was
  released by \citep{MalyshevKGGB18} and contains 208 million parseable Wikidata
  queries, with over 55 million regular path queries.
\item The data set of \citep{BonifatiMT-vdlbj20}, which contains 339 million
  parseable queries, mostly from DBpedia, but also from LinkedGeoData,
  BioPortal, OpenBioMed, Semantic Web Dog Food and the British Museum. These
  queries contain around 1.5 million regular path queries.\footnote{One sees
    that regular path queries are much more common in the Wikidata log than in
    the (mainly) DBpedia log. The reason for this is that the graph structure of DBpedia
    was designed before RPQs (property paths) existed in SPARQL.}
\end{enumerate}
When we list multiple types of atoms in the table, we allow concatenations of
these types. So, $a(b+c)^*d$ is of type $a,A^*$ and also of the more general
type $A,A^*$.

Another motivation to study CRPQs with atoms of the forms $a, a^*, A$, and $A^*$
is that these are currently the only expressible atoms in CRPQs in Cypher~9~\cite[Figure
3]{FrancisGGLLMPRS-sigmod18}, a popular query language for property graphs.

We study the complexity of CRPQ containment for such fragments $\cF$ of ``simple
CRPQs'', that is, CRPQs that only use atoms of some of the types $a, a^*, A$,
and $A^*$. For each fragment $\cF$, we provide a complete picture of the
complexities of containment problems of the form $\cF \subseteq \cF$, $\cF
\subseteq \crpq$, and $\crpq \subseteq \cF$ (cf.\
Table~\ref{tab:complexity-smaller}, which we discuss in
Section~\ref{sec:mainresults} in detail). The main take-aways
are: \begin{enumerate}
\item Even for such simple CRPQs, containment of the form $\cF \subseteq \cF$
  can become \expspace-complete. Moreover, this lower bound already holds for
  containment of CRPQs using only $a$-atoms and $A^*$-atoms. This was surprising
  to us, because such CRPQs seem at first sight to be only mild extensions of
  conjunctive queries: they extend conjunctive queries only with atoms of the
  form $(a_1+\cdots+a_n)^*$, i.e., Kleene closures over sets of symbols. The
  contrast between \np-completeness of containment for conjunctive queries and
  \expspace-completeness for CRPQs that additionally allow $(a_1+\cdots+a_n)^*$
  is quite striking.
\item As soon as we disallow disjunction within Kleene closures in $\cF$, the
  complexity of the abovementioned containment problems drops drastically to $\pitwo$ or \pspace.
  \matthias{Why \pspace? This is only with general CRPQs on the left.} \wim{I
    meant the complexity of the three problems above the itemize. I added some
    more info here. Is it clearer? I understand that it's a bit amiguous.}
  The good news is that such regular expressions are still extremely common in
  practice, e.g., over 98\% of the RPQs in the Wikidata query logs
  (Table~\ref{tab:percentages}).
\end{enumerate}

\begin{table*}
  \centering
  \begin{tabular}{@{}crrrr@{}}
    \toprule
    & \multicolumn{4}{c}{Wikidata Queries} \\
    & \multicolumn{2}{c}{One-way RPQs} & \multicolumn{2}{c}{Two-way RPQs}\\
    RPQ Class & Valid \% & Unique \% & Valid \% & Unique \%\\ 
    \midrule
    $A,A^*$ & 99.02\% & 98.73\% & 99.83\% & 99.83\%\\
    $A,a^*$ & 98.40\% & 98.31\% & 99.22\% & 99.44\%\\
    $a,A^*$ & 93.50\% & 95.99\% & 94.30\% & 97.10\%\\
    $a,a^*$ & 92.88\% & 95.58\% & 93.69\% & 96.69\%\\
    \midrule
    Total & 55,333K & 14,189K & 55,333K & 14,189K\\ 
    \bottomrule
  \end{tabular}
  \quad
  \begin{tabular}{@{}rrrr@{}}
    \toprule
    \multicolumn{4}{c}{DBpedia$^\pm$ Queries}\\
    \multicolumn{2}{c}{One-way RPQs} & \multicolumn{2}{c}{Two-way RPQs}\\
    Valid \% & Unique \% & Valid \% & Unique \%\\
    \midrule
    68.99\% & 47.41\% & 94.35\% & 82.86\%\\
    65.29\% & 46.02\% & 75.00\% & 76.44\%\\
    64.27\% & 31.37\% & 89.51\% & 66.53\%\\
    60.57\% & 29.97\% & 65.87\% & 44.45\%\\
    \midrule
    1,529K & 405K & 1,529K & 405K\\
    \bottomrule
  \end{tabular}   
  \caption{Percentage of simple RPQs and 2RPQs in the
    Wikidata query logs in the study \citep{BonifatiMT-www19} (left) and the diverse query
    logs of \citep{BonifatiMT-vdlbj20} (right). For every analysis, we show
    percentages on all valid queries (Valid) and on all valid queries after
    duplicate elimination (Unique).}\label{tab:percentages}
\end{table*}

\paragraph{Organization}
In Section~\ref{sec:preliminaries} we introduce the necessary notation. In
Section~\ref{sec:mainresults} we present our main results which are then proved
in Sections~\ref{sec:notransitive}--\ref{sec:wfragment}. We discuss related work
in detail in Section~\ref{sec:relatedwork} and we conclude in
Section~\ref{sec:conclusions}.
Due to the page limit, we can only provide
sketches of some of the proofs. We will make longer proofs available on ArXiv.
\wim{OK like this? Do we already want to add a link?}

\makeatletter{}
\section{Preliminaries}\label{sec:preliminaries}

Let $\Sigma$ be an infinite set of \defstyle{labels}, to which we sometimes also
refer as the \defstyle{alphabet}. We abstract knowledge bases (or KBs, knowledge
graphs, or graph databases) as finite, edge-labeled directed graphs $K=(V,E)$,
where $V$ is a finite nonempty set of nodes, and $E$ is a set of labeled
directed edges $(u,a,v)\in V \times \Sigma \times V$.
A \defstyle{path} is a (possibly empty) sequence $\pi = (v_0,a_1,v_1)\cdots
(v_{n-1},a_n,v_n)$ of edges; we say that $\pi$ is a path from $v_0$ to $v_n$.
The \defstyle{length} of $\pi$ is the number $n \geq 0$ of edges in the sequence. We
denote by $\lab(\pi) $ the word $a_1\cdots a_n$ of edge labels seen along the path.
If all edges of $\pi$ have the same label $a \in \Sigma$, we say $\pi$ is an
\defstyle{$a$-path}. By $\varepsilon$ we denote the empty word. 
Regular expressions are defined as usual. We use uppercase letters $R$ for regular
expressions and denote their language by $L(R)$.

A \defstyle{conjunctive regular path query $(\crpq)$} has the general form
$Q(x_1, \ldots, x_n) \leftarrow A_1 \wedge\ldots \wedge A_m$. The
\defstyle{atoms} $A_1, \ldots, A_m$ are of the form $y R z$, where $y$ and $z$
are variables and $R$ is a regular expression. Each \defstyle{distinguished variable $x_j$}
from the left hand side has to occur in some atom on the right hand side. A
\defstyle{homomorphism} from $Q$ to $K$ is a mapping $\mu$ from the variables of $Q$
to $V$. Such a homomorphism \defstyle{satisfies} an atom $x R y$ if there is a path from
$\mu(x)$ to $\mu(y)$ in $K$ which is labeled with a word in $L(R)$.
A 
homomorphism from $Q$ to $K$ is called a \defstyle{satisfying homomorphism} if it
satisfies each atom $A_i$. For brevity, we also use the term
\defstyle{embedding} for satisfying homomorphisms.
The set of \defstyle{answers $\ans(Q,K)$} of a \crpq
$Q$ over a knowledge base $K$ is the set of tuples $(d_1, \dotsc, d_n)$ of nodes
of $K$ such that there exists a satisfying homomorphism for $Q$ on $K$ that maps
$x_i$ to $d_i$ for every $1 \leq i \leq n$.

Given two \crpqs $Q_1$, $Q_2$, we say that $Q_1$ is \defstyle{contained} in
$Q_2$, denoted by $Q_1 \subseteq Q_2$, if $ans(Q_1,K) \subseteq ans(Q_2,K)$ for
every knowledge base $K$. We say $Q_1$ is \defstyle{equivalent} to $Q_2$,
denoted by $Q_1 \equiv Q_2$, if $Q_1 \subseteq Q_2$ and $Q_2 \subseteq Q_1$.
We study the following problem, for various fragments $\+F_1,\+F_2$ of $\crpq$.
\decisionproblem{.45\textwidth}{
	\querycon of $\+F_1$ in $\+F_2$}{Two queries $Q_1 \in \+F_1$, $Q_2 \in \+F_2$.}{Is $Q_1 \subseteq Q_2$? }

\paragraph{Example.} To illustrate query containment we consider the
following example. Let $Q_1(x_1, x_2) \leftarrow (x_1\; \textsf{app}\; jm_1)
\land (x_2 \; \textsf{app} \; jm_1) \land (jm_1 \; \textsf{app}
  \; jm_2)$. Query $Q_1$ returns $(x_1$, $x_2)$ only if they were both the apprentices
of $jm_1$ (a Jedi master) who was in turn an apprentice of $jm_2$. Now consider
$Q_2(x_1, x_2) \leftarrow (x_1 \; \textsf{app} \cdot \textsf{app} \; jm)
 \land (x_2\; \textsf{app}\cdot \textsf{app} \; jm)$. We see that $Q_1
\subseteq Q_2$. However if we remove the last atom from $Q_1$, $Q_1 \subseteq
Q_2$ is not necessarily true. The following database provides a counterexample.
    \begin{figure}[h]
	\centering
	\begin{subfigure}{\linewidth}\centering
	\begin{tikzpicture}[auto, >=latex, 	triangle/.style = { regular polygon, regular polygon sides=3, inner sep=0, minimum size=1cm}]	\def\stretch{1.5};
	\def\ystretch{0.3};
	
	\node  (x0) at (\stretch*0,\ystretch*0) {};
	\node  (x1) at (\stretch*1,\ystretch*-1.0) {};
	\node (x2) at (\stretch*1,\ystretch*1.0) {};
	
	\fill (x0) circle (2pt) node[label=left:$\textsc{Yoda}$,draw]{};
	\fill (x1) circle (2pt) node[label=right:$\textsc{Luke}$,draw]{};
	\fill (x2) circle (2pt) node[label=right:$\textsc{Obi-Wan}$,draw]{};
	
	\path 
	(x1) edge	[->,below]  node  {$\textsf{app}$} (x0)
	(x2) edge	[->,above]  node  {$\textsf{app}$} (x0)
	;
	\end{tikzpicture}
	\end{subfigure}
	\end{figure}
	$Q_1$ without the last atom returns $(\textsc{Luke}$, $\textsc{Obi-Wan}$) though
$Q_2$ does not. \hfill $\blacksquare$ 
Let $Q$ be the \crpq $Q(x_1, \ldots, x_n) \leftarrow y_1 R_1 y_2 \wedge \ldots
\wedge y_{2m-1} R_m y_{2m}$. Let $K$ be a knowledge base and $\nu$ a total mapping from
the variables $\{x_1, \ldots, x_n, y_1, \ldots, y_{2m}\}$ of $Q$ to the nodes of
$K$. Then $K$ is \defstyle{$\nu$-canonical} for $Q$ if
\begin{itemize}
\item  $K$ constitutes of $m$ simple paths,
          one for each atom of $Q$, which are node- and edge-disjoint except for the
  start and end nodes, and 
\item for each $i \in \{1,\ldots, m\}$ the simple path $\pi_i$ associated to the
  atom $y_{2i-1} R_i y_{2i}$ connects the node $\nu(y_{2i-1})$ to the node
  $\nu(y_{2i})$ and has $\lab(\pi_i) \in L(R_i)$.
\end{itemize}
It is easy to see that $Q_1 \not\subseteq Q_2$ iff there exists a knowledge base $K$ and a mapping $\nu$ from the variables of $Q_1$ to the nodes of $K$ such that (i) $K$ is $\nu$-canonical for $Q_1$ and (ii) $(\nu(x_1), \ldots, \nu(x_n))\notin \ans(Q_2,K)$.
Therefore, to decide \querycon, it suffices to study containment on knowledge bases  which are $\nu$-canonical for $Q_1$. We call these knowledge bases  \defstyle{canonical models} of $Q_1$.

It is well-known that there is a natural correspondence between (the bodies of)
CRPQs and graphs by viewing their variables as nodes and the atoms as edges. We
will therefore sometimes use terminology from graphs for CRPQs (e.g., connected
components).

\makeatletter{}\section{Main Results}\label{sec:mainresults}

\begin{table}  \centering 
	\begin{tabular}{@{}l@{\hspace{2.5mm}}l@{\hspace{2.5mm}}l@{\hspace{2.5mm}}l@{}}
		\toprule
		${\mathcal F}$ & ${\mathcal F} \subseteq {\mathcal F}$ & ${\mathcal F} \subseteq \crpq$ &  $\crpq \subseteq {\mathcal F}$\\
		\midrule
		$a$ & \np($\dag$) & \np (\ref{theo:a-in-crpq}) &\pitwo(\ref{theo:crpq-in-a}) \\
		$A$ & \pitwo (\ref{theo:A-in-a}) & \pitwo &   \pspace(\ref{theo:crpq-in-A}) \\
		$(a,a^*)$ & \pitwo ($\ddag$) & \pitwo& \pspace(\ref{theo:crpq-in-a-astar})\\
		$(A,a^*)$ & \pitwo & \pitwo(\ref{cor:A-astar-in-crpq}) & \pspace(\ref{theo:crpq-in-A-astar})\\
		$(a,A^*)$  &\expspace (\ref{theo:a-Astar}) &\expspace& \expspace \\
		$(A,A^*)$ &\expspace  & \expspace ($\star$) & \expspace ($\star$) \\
		\bottomrule
	\end{tabular}
	\caption{Complexity of Containment of different fragments ${\mathcal F}$ of
    \crpqs. All results are complete for the class given.
		We provide references in round brackets. When there is no bracket,
    the result follows directly from another cell in the table.
		($\dag$): \citep{ChandraM-stoc77}, ($\ddag$): \cite[fragment $(l^*)$]{DeutschT-dbpl01},
    ($\star$): \citep{CalvaneseGLV-kr00}}
	\label{tab:complexity-smaller}
\end{table}

For a class of regular languages $\cal L$ we write $\crpq(\cal L)$ to denote the
set of CRPQs whose languages (of regular expressions in atoms) are in $\cal L$.
We use the same abbreviations for ${\cal L}$ as discussed in the Introduction: $a$ for regular
expressions that are just a single symbol, $a^*$ for Kleene closures of a single
symbols, $A$ for disjunctions (or sets) of symbols, and $A^*$ for Kleene
closures of disjunctions (or sets) of symbols. A sequence of abbreviations in
$\cal L$ represents options: for instance, $\crpq(a, A^*)$ is the set of CRPQs 
in which each atom uses either a single symbol or a transitive closure
of a disjunction of symbols.\footnote{In some proofs, we
  also allow concatenations of these forms. But this does not make a difference:
  in CRPQs such concatenations can always be eliminated at the cost of a few
  extra variables.}

In this paper, we give a complete overview of the complexity of containment for
the fragments $\cF = \crpq(a)$, $\crpq(A)$, $\crpq(a,a^*)$, $\crpq(A,a^*)$,
$\crpq(a,A^*)$, and $\crpq(A,A^*)$. That is, for each of these fragments we prove
that their containment problem is complete for \np, $\pitwo$, or \expspace.
Furthermore, for each of these fragments $\cF$, we give a complete overview of
the complexity of the containment problems of the form $\cF \subseteq \crpq$ and
$\crpq \subseteq \cF$. An overview of our results can be found in
Table~\ref{tab:complexity-smaller}. All results are completeness results. Some of the results were already obtained in other papers, which we
indicate in the table. 

Interestingly, our results imply that containment is \expspace-complete only if
we allow sets of symbols under the Kleene star both in the left- and right-hand
queries. As soon as we further restrict the usage of the Kleene star on one
side, the complexity drops to \pspace or even \pitwo. As it turns out, queries
having $a^*$ as only means of recursion is still very representative of the
queries performed in practice, as evidenced in Table~\ref{tab:percentages},
where over 98\% of the RPQs in the Wikidata logs are of this form. In the
DBpedia$^\pm$ logs, this percentage is still around 70\% of the total RPQs. Two
main reasons why this percentage is lower here are that ``wildcards'' of the
form $!a$, i.e., follow an edge \emph{not} labeled $a$, and 2RPQs of the form
$(a + \hat{\ }a)^*$, i.e., undirected reachability over $a$-edges, make up
around 15\% and 20\% respectively of the expressions in unique queries in
DBpedia$^\pm$. The fact that equivalence testing is \pitwo for these queries,
gives hope that optimizations by means of static analysis may be practically
feasible for most of the CRPQ used for querying ontologies and RDF data.

Our results apply to both finite and infinite sets of labels, if we do not explictly
say otherwise. The reason is that as long as the query language does not allow
for wildcards, we can always restrict to the symbols explicitly used in the
queries, which is always a finite set.

If wildcards are allowed, the complexity of query containment can heavily depend
on the finiteness of the alphabet of edge labels $\Sigma$. We discovered that our techniques can be used
to settle an open question (and correct an error) in the work of
\citet{DeutschT-dbpl01}, who have also considered containment of simple CRPQs.
Deutsch and Tannen considered CRPQ fragments motivated by the navigational
features of XPath and claimed that containment for their \emph{W-fragment} (see
Section~\ref{sec:wfragment} for a definition), using infinite alphabets, is \pspace-hard.
However, we prove that containment for this fragment is in \pitwo
(Theorem~\ref{theo:w-in-crpq}). The minor error is that Deutsch and Tannen assumed \emph{finite}
alphabets in their hardness proof. In fact, when one indeed assumes a finite set
of edge labels in KBs, we prove that the containment problem for
the W-fragment is \expspace-complete (Proposition~\ref{prop:w}).

\makeatletter{}\section{No Transitive Closure}\label{sec:notransitive}

In this section we study simple CRPQ fragments without transitive closure.
We first observe that $\crpq(a)$ is equivalent to the well-studied class of
conjunctive queries (CQ) on binary relations.
\begin{theorem}[\citeauthor{ChandraM-stoc77} \citeyear{ChandraM-stoc77}]
  Containment of $\crpq(a)$ in $\crpq(a)$ is \np-complete.
\end{theorem}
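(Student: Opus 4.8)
The plan is the classical Chandra--Merlin argument, transported to the notation $\crpq(a)$. The first observation is that $\crpq(a)$ \emph{is}, up to syntax, the class of conjunctive queries over a binary relational vocabulary: an atom $y\,a\,z$ is the relational atom $a(y,z)$, and since the regular expression ``$a$'' matches only the one-letter word $a$ (in particular never $\varepsilon$), a satisfying homomorphism of a $\crpq(a)$ into a knowledge base $K$ is literally a relational homomorphism into $K$ viewed as a structure. Consequently the canonical models of $Q_1$ (in the sense of the Preliminaries) are exactly the homomorphic images of the body of $Q_1$, and the ``frozen'' model $\widehat{Q_1}$ --- obtained from $\nu = \mathrm{id}$, i.e.\ by turning the variables of $Q_1$ into pairwise distinct fresh constants and keeping the atoms as edges --- is a canonical model that maps homomorphically onto all the others.

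For the upper bound I would prove the standard equivalence: $Q_1 \subseteq Q_2$ iff there is a homomorphism $h$ from the body of $Q_2$ to the body of $Q_1$ with $h(x^{Q_2}_j) = x^{Q_1}_j$ for every distinguished variable (``homomorphism'' meaning it sends each atom $y\,a\,z$ of $Q_2$ to an atom $h(y)\,a\,h(z)$ of $Q_1$). The direction ($\Leftarrow$) is immediate: given any $K$ and any embedding $g$ of $Q_1$ into $K$ witnessing a tuple $\bar d$, the composition $g \circ h$ is an embedding of $Q_2$ witnessing the same tuple. For ($\Rightarrow$), evaluate $Q_1$ on $\widehat{Q_1}$: the tuple of constants for the distinguished variables lies in $\ans(Q_1,\widehat{Q_1})$ via the identity embedding, hence, by the assumption, in $\ans(Q_2,\widehat{Q_1})$; the witnessing embedding composed with $\nu^{-1}$ is exactly the sought $h$ (here it is essential that every $a$-path in $\widehat{Q_1}$ has length $1$, so that it corresponds to a single atom of $Q_1$). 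This gives the \np{} algorithm: guess $h$, a map from the variables of $Q_2$ to those of $Q_1$ and hence of size polynomial in $|Q_1| + |Q_2|$, and verify in polynomial time that it preserves all atoms and maps distinguished variables correctly.

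For the lower bound I would reduce from $3$-colorability of loopless directed graphs. Given such a graph $G = (V_G, E_G)$, let $Q_2$ be the Boolean $\crpq(a)$ whose body has a variable $x_v$ for each $v \in V_G$ and an atom $x_u\,a\,x_v$ for each $(u,v) \in E_G$, and let $Q_1$ be the fixed Boolean $\crpq(a)$ whose body is the complete loopless directed graph on three ``colour'' variables, i.e.\ the atoms $c_i\,a\,c_j$ for all $i \neq j$ in $\{1,2,3\}$. By the equivalence above, $Q_1 \subseteq Q_2$ holds iff there is a homomorphism from the body of $Q_2$ to the body of $Q_1$, i.e.\ a map $V_G \to \{c_1,c_2,c_3\}$ sending the two endpoints of every edge to distinct colours --- precisely a proper $3$-colouring of $G$. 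The reduction is clearly polynomial, so containment is \np-hard, and together with the upper bound \np-complete.

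I do not expect a genuine obstacle here: this is a textbook result. The two points deserving a line of care are (i) that single-symbol atoms create no $\varepsilon$-related or ``long path'' degeneracies, so that ``satisfying homomorphism'' coincides with ``relational homomorphism'' and canonical models reduce to $\widehat{Q_1}$; and (ii) the bookkeeping with distinguished variables in both directions of the homomorphism equivalence. Finiteness of the alphabet is irrelevant, since only the finitely many labels occurring in $Q_1$ and $Q_2$ can ever be relevant.
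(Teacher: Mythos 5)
Your proof is correct and is exactly the classical Chandra--Merlin argument that the paper invokes by citation rather than reproving: the paper only remarks that $\crpq(a)$ coincides with conjunctive queries over binary relations and attributes the result to \citet{ChandraM-stoc77}. Your upper bound via the homomorphism theorem (using the frozen canonical model, with the correct observation that single-symbol atoms make satisfying homomorphisms literally relational homomorphisms) and your lower bound via 3-colorability are both sound and standard, so there is nothing to add.
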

Even when we allow arbitrary queries on the right, the complexity stays the
same. The reason is that the left query has a single canonical model $K$ of
linear size, and
thus we can check containment by testing for a satisfying homomorphism from $Q_2$ to
$K$ (that preserves the distinguished nodes).
\begin{theorem}\label{theo:a-in-crpq}
  Containment of $\crpq(a)$ in $\crpq$ is \np-complete
\end{theorem}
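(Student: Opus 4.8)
The plan is to exploit that a left‑hand query $Q_1 \in \crpq(a)$ has an essentially unique canonical model of linear size, so that containment reduces to a single embedding test. Since every atom of $Q_1$ has the form $y\,a\,z$ with a single symbol $a$, any $\nu$-canonical model for $Q_1$ consists of one simple path of length exactly $1$ per atom; that is, it is precisely the graph $K_\nu$ with node set $\{\nu(y) : y \text{ a variable of } Q_1\}$ and edge set $\{(\nu(y),a,\nu(z)) : (y\,a\,z)\text{ an atom of }Q_1\}$. Let $K_1 := K_{\nu_0}$ be the ``free'' canonical model, where $\nu_0$ is injective (so the variables of $Q_1$ become pairwise distinct nodes); then $K_1$ has size linear in that of $Q_1$. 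Every canonical model $K_\nu$ is a homomorphic image of $K_1$ under the map $h$ induced by $\nu$, and since a graph homomorphism sends a path to a path with the same label, composing any embedding $\mu\colon Q_2 \to K_1$ with $h$ yields an embedding $Q_2 \to K_\nu$ with the same values on the distinguished variables. Hence, if $Q_2$ embeds into $K_1$ while mapping each distinguished variable $x_i$ to $\nu_0(x_i)$, the same holds for every canonical model of $Q_1$; combined with the canonical‑model characterisation of containment from Section~\ref{sec:preliminaries}, this gives: $Q_1 \subseteq Q_2$ iff $Q_2$ has an embedding into $K_1$ that maps each distinguished variable $x_i$ to $\nu_0(x_i)$.

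This yields the \np upper bound directly. Guess a mapping $\mu$ from the variables of $Q_2$ to the (linearly many) nodes of $K_1$ that respects distinguished variables, and then verify that every atom $y\,R\,z$ of $Q_2$ is satisfied, i.e.\ that $K_1$ contains a path from $\mu(y)$ to $\mu(z)$ whose label lies in $L(R)$. That last test is the standard regular‑reachability check: build an NFA for $R$, take its product with $K_1$, and test reachability; this runs in \ptime, hence polynomially in the size of the input. For the matching lower bound, note that containment of $\crpq(a)$ in $\crpq(a)$ is a special case of containment of $\crpq(a)$ in $\crpq$, and the former is already \np-hard by the preceding theorem; thus so is the latter.

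The only point that needs genuine care is the reduction to a single canonical model — namely the observation that identifying variables of $Q_1$ can only help $Q_2$ find an embedding, so that it suffices to test $K_1$ — together with the (routine) fact that evaluating a single RPQ atom on a fixed graph is in \ptime via the automaton–graph product. Everything else is bookkeeping.
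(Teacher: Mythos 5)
Your proof is correct and follows essentially the same route as the paper, which argues (in one line) that the left query has a linear-size canonical model and that containment reduces to testing a satisfying homomorphism from $Q_2$ into it. You additionally spell out the one point the paper glosses over — that the injective (``free'') canonical model suffices because every other canonical model is a homomorphic image of it — which is exactly the right justification.
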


If we allow more expressive queries on the left, the complexity becomes
$\pitwo$, even if the right-hand queries are CQs.
\begin{theoremrep} \label{theo:A-in-a}
	$\querycon$ of $\crpq(A)$ in $\crpq(a)$ is \pitwo-complete, even if the size of the alphabet is fixed.
\end{theoremrep}
\begin{proofsketch}
	The upper bound is immediate from Corollary~\ref{cor:A-astar-in-crpq}, which in turn follows from Theorem~\ref{theo:w-in-crpq}. Both these results are proved later. For
  the lower bound, we reduce from $\forall \exists$-QBF (i.e.,
  $\Pi_2$-Quantified Boolean Formulas).
Let
	\[
	\Phi\;\; =\;\; \forall x_1, \ldots, x_n\; \exists y_1, \ldots, y_\ell\;
	\varphi(x_1, \ldots, x_n,y_1, \ldots, y_\ell)
	\]
	be an instance of
	$\forall \exists$-QBF such that $\varphi$ is quantifier-free and in
	3-CNF. We construct boolean queries $Q_1$ and $Q_2$ such that
	$Q_1 \subseteq Q_2$ if, and only if, $\Phi$ is satisfiable.
	
  The query $Q_1$ is defined in Figure~\ref{fig:queryA:Q1-DE}, over the alphabet of labels $\{a,x_1,\dots,\allowbreak x_n, \allowbreak y_1,\dots, \allowbreak y_\ell, \allowbreak t,f\}$. We now explain how we define $Q_2$, over the same alphabet. Every clause of $\Phi$ is represented
  by a subquery in $Q_2$, as depicted in Figure~\ref{fig:queryA:q2}. All nodes
  with identical label ($y_{1,t}$ and $y_{1,f}$ in gadgets $D,
  E$) in Figures~\ref{fig:queryA:Q1-DE} and~\ref{fig:queryA:q2} are the same
  node. (So, both queries are DAG-shaped.) Note that for every clause and every existentially quantified literal $y_i$ therein we have one node named $y_{i,tf}$ in $Q_2$. The $E$-gadget is designed
  such that every represented literal can be homomorphically embedded, while 
  exactly one literal has to be embedded in the $D$-gadget.

  The intuitive idea is that the valuation of the $x$-variables is given by the
  concrete canonical model $K$ (i.e., whether the corresponding edge is labeled
  $t$ or $f$ in the $D$ gadget), while the valuation of the $y$-variables is
  given by the embedding of $Q_2$ into $K$ (i.e., whether the corresponding node
  is embedded into the node $y_{\mbox{\textvisiblespace},t}$ or
  $y_{\mbox{\textvisiblespace},f}$). The embedding of $y$-variables across several
  clauses has to be consistent, as all clauses share the same nodes
  $y_{\mbox{\textvisiblespace},tf}$, which uniquely get embedded either into 
  $y_{\mbox{\textvisiblespace},t}$ or $y_{\mbox{\textvisiblespace},f}$. Hence, when the
  formula $\Phi$ is satisfiable, for any assignment to the variables $\{x_i\}$
  (given by the choice of $t$/$f$ edges in $D$), there is a mapping from
  $y_{\mbox{\textvisiblespace},tf}$ to one of $y_{\mbox{\textvisiblespace},f}$ or
  $y_{\mbox{\textvisiblespace},t}$. This gives $Q_1 \subseteq Q_2$. Conversely, if
  $Q_2$ can be embedded in $K$, then, for a choice of $t$/$f$ edges in $D$, we
  have an embedding of each clause gadget of $Q_2$ in $K$. In particular, we can
  always map a literal in each clause of $Q_2$ to $D$, ensuring that $\varphi$ is
  satisfied. As this is true for any knowledge base $K$ obtained for all
  possible $t$/$f$ assignments to $\{x_i\}$, we obtain $\Phi$ is satisfiable.
  
  We note that this result can be extended to alphabets of constant size by
  encoding $x_i$ as $\hat x_i= \tileN^{i-1}\tileY\tileN^{n-i-1} \in
  \{\tileN, \tileY\}^n$ and $y_i$ as $\hat y_i= \tilen^{i-1}\tiley\tilen^{\ell-i-1} \in
  \{\tilen, \tiley\}^\ell$.
\end{proofsketch}
\begin{proof}
	The upper bound follows immediately from Deutsch and Tannen~\citep{DeutschT-dbpl01},
	to be more precise, from their problem named $(*,\mid)$.
	
	For the lower bound we use a reduction from
	$\forall \exists$-QBF. The main idea is to use
	sets $\{t,f\}$ in $Q_1$ to encode true or false. 
	
	More precisely,  let
	\[
	\Phi\quad =\quad \forall x_1, \ldots, x_n\; \exists y_1, \ldots, y_\ell\;
	\varphi(x_1, \ldots, x_n,y_1, \ldots, y_\ell)
	\]
	be an instance of
	$\forall \exists$-QBF such that $\varphi$ is quantifier free and in
	3-CNF. We construct boolean queries $Q_1$ and $Q_2$ such that
	$Q_1 \subseteq Q_2$ if and only if $\Phi$ is satisfiable.

	The \textbf{query $Q_1$} is sketched in
	Figure~\ref{fig:queryA:Q1-DE} and built as follows: The basis is an $a$-path of
	length 4. We add 4 gadgets $E$ to the outer
	nodes of the path and one gadget $D$ at the innermost.  The choice of 4 $E$ gadgets 
	surrounding the $D$ gadget will be made clear once we discuss $Q_2$. 
		Basically,
	the $E$-gadgets will accept everything while the $D$-gadget will
	ensure that the chosen literal evaluates to true. The gadgets are also depicted 
	in Figure~\ref{fig:queryA:Q1-DE}. The gadgets are constructed as follows.

	The \textbf{gadget $D$} is constructed such that the root node has
	one outgoing edge for each variable in $\Phi$, that is, $n+\ell$
	many. Each edge is labeled differently, that is,
	$x_1, \ldots, x_n, y_1, \ldots, y_\ell$. After each $x_i$-edge we
	add a $\{t,f\}$-edge. Each of them
	leads to a different node.  For each $i \in \{1,\ldots, \ell\}$ we do
	the following. We add a $t$-edge to a node we name \yit after the
	$y_i$-edge and an edge labeled $f$ that leads to a node we name
	\yif.  We named these nodes because we need those nodes also in the
	$E$-gadgets. Nodes with the same names across gadgets are actually the same node. 
	
	Each \textbf{gadget $E$} is constructed similar to the $D$
	gadget. The root node has one outgoing
	edge for each variable in $\Phi$, that is $n+\ell$ many. Each edge
	is labeled differently, that is
	$x_1, \ldots, x_n, y_1, \ldots, y_\ell$. After each $x_i$-edge we
	add a $t$-edge and an $f$-edge. Each of those edges leads to a different node.  After each $y_i$-edge we add a $t$-edge and a $f$-edge to both \yit and to \yif.
	
	We now explain the construction of $Q_2$. An example is given in Figure~\ref{fig:queryA:q2}.
	For each clause $i$, \textbf{query $Q_2$} has a small DAG, 
	which might share nodes ($\yktf$) with the DAGs constructed for the other clauses.  
	For clause $i$, we construct
	$C_i^1$, with an  $a$-edge to the gadget $C_i^2$, and from there again a 
	$a$-edge to the gadget $C_i^3$.
	
	The gadget $C_i^j$ represents the $j$th literal in the $i$th
	clause. Since the QBF is in 3-CNF, we have $j \in \{1,2,3\}$.
	If the literal is the positive variable $x_k$,
	$C_i^j$ is a path labeled $x_k t$. If it is the
	negative variable $\neg x_k$, $C_i^j$ is a path labeled
	$x_k f$. If the
	literal is the positive variable $y_k$, $C_i^j$ is a path
	labeled $y_k t$ and it ends in a node we call $\yktf$ and, if it
	is the negative variable $\neg y_k$, $C_i^j$ is a path
	labeled $y_k f$ and it ends in $\yktf$, too. 
	
	This completes the construction. We will now give some intuition.
	The gadget $D$ controls via the $\{t,f\}$-edges, which variables
	$x_i$ are set to true and which to false. We will consider it true
	whenever there is an $x_i t$-path and false otherwise, that is,
	there is an $x_i f$-path.  
	Depending on this, we can either map
	$C_i^j$ into it or not. The $E$ gadgets are constructed such
	that each $C_i^j$ can be mapped into it. The query $Q_2$ can decide, which 
	path should be mapped into $D$ and therefore, which literal should be verified. 
	The structure of $Q_1$ where two $E$ gadgets each surround the $D$ gadget aids in embedding the clauses
	$C_i^1, C_i^2, C_i^3$ for each $i$ in the knowledge base $G$. If the $i$th clause is $(x_2 \vee \neg y_1 \vee \neg x_3)$ 
	and if in the canonical model $G$, we have the assignment of $f$ to $x_2$, $t$ to $x_3$, then we can embed 
	$C_i^1, C_i^3$ in the second and third $E$'s, and $\neg y_1$ can be embedded in $y_{1f}$ in $D$. Embedding 
	$\neg y_1$  in $y_{1f}$ fixes the assignment $f$ to $y_1$ across all 
	gadgets $E, D$, and all clauses in $Q_2$. Likewise, for a clause   
	 $(x_1 \vee \neg x_4 \vee y_5)$ in $\Phi$, and an assignment $f$ to $x_1$, 
	 $t$ to $x_4$ in the canonical model $G$, we can embed $x_1, \neg x_4$ in the first and second $E$'s and 
	 $y_5$ to the node $y_{5t}$.

	We will now show
	correctness, that is: $Q_1 \subseteq Q_2$ if and only if $\Phi$ is
	satisfiable.  Let $Q_1 \subseteq Q_2$. Then there exists
	a homomorphism from $Q_2$ to each canonical model of $Q_1$. The
	canonical models of $Q_1$ look exactly like $Q_1$ except that each
	$\{t,f\}$-edge is replaced with either $t$ or $f$.

	Let $B$ be an arbitrary canonical model of $Q_1$ and
	$D_B$ the gadget $D$ in $B$. 
	We define $\theta_B(x_i)=1$ if $D_B$ contains an
	$x_i t$-path and $\theta_B(x_i)=0$ otherwise.
	Let $h$ be a homomorphism mapping $Q_2$ to $B$.
	We furthermore define $\theta_B(y_i)=1$ if $h$ maps
	\yitf to \yit and $\theta_B(y_i)=0$ otherwise, i.e., if
	\yitf is mapped to \yif.  We now show that $\theta_B$ is well-defined and
	satisfies $\varphi$.  It is obvious that each $C_i^j$ will be mapped either to
	the gadget $D_B$ or to $E$ and that for each $i \in \{1,\ldots, m\}$
	exactly one $C_i^j$ is mapped to $D_B$.  
	If $C_i^j$ corresponds to $x_k$, i.e.,
	it is a path labeled $x_k t$, then it can only be mapped into $D_B$
	if $\theta_B(x_k)=1$. Analogously, if $C_i^j$ corresponds to
	$\neg x_k$, it is a path labeled $x_k f$, and can
	therefore only be mapped into $D_B$ if $\theta_B(x_k)=0$.  If
	$C_i^j$ corresponds to $y_k$ or $\neg y_k$, it can always be
	mapped into $D_B$, but since \yktf can only be mapped either to $y_{k,t}$
	or $y_{k,f}$, we can either map positive $y_k$ into $D_B$ or negative
	ones, but not both. Therefore, the definition of $\theta_B(y_k)$ is
	unambiguous and it indeed satisfies $\varphi$.
	
	Since $B$ is arbitrary, we obtain a choice $y_1, \ldots, y_\ell$ for
	all possible truth-assignments to $x_1,\ldots, x_n$ this
	way. Therefore, $\Phi$ is satisfiable.

	For the only if direction let $\Phi$ be satisfiable. Then we find
	for each truth-assignment to $x_1, \ldots, x_n$ an assignment to
	$y_1, \ldots, y_\ell$ such that
	$\varphi(x_1, \ldots, x_n, y_1, \ldots, y_\ell)$ is true. Let
	$\theta$ be a function that, given the $x_i$, returns an assignment
	for all $y_i$ such that the formula evaluates to true. We will show
	how to map $Q_2$ into an arbitrary canonical database $B$ of $Q_1$.
					
	Let $B$ and $\theta$ be given. Let $D_B$ be again the gadget $D$ in $B$.
	We use $\theta$ to obtain truth-values for
	$y_1, \ldots, y_\ell$ as follows.  Since this assignment is satisfiable, there
	is a literal that evaluates to true in each clause. We map this
	literal to $D_B$ and the others in this clause to gadgets $E$.  If this
	literal is $x_i$, then we can map to the $x_i t$-path in $D_B$. If it is $\neg x_i$, then we can map to the
	$x_i f$ path in $D_B$.  If the
	literal is $y_i$, we can map the $y_i t$-path ending in $\yitf$ to
	$D_B$. This also implies that each $\yitf$ in $Q_2$ is mapped to \yit,
	which is no problem since each path mapped to $E$ can choose freely
	between \yit and \yif and, since $\theta$ is a function, we only
	have either $\theta(y_i)=1$ or $\theta(y_i)=0$.  Analogously, if the
	literal is $\neg y_i$, we can map the $y_i f$-path ending in
	$\yitf$ to $D_B$, which implies that each $\yitf$ in $Q_2$ is mapped to
	\yif.
\end{proof}

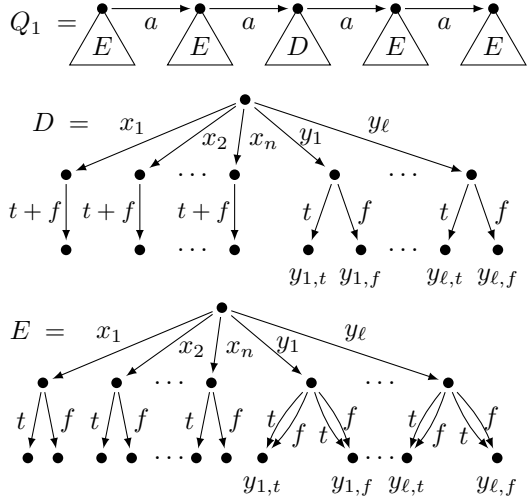
\begin{figure}[t]
	\centering
	\begin{subfigure}{\linewidth}\centering
	\begin{tikzpicture}[auto, >=latex, 	triangle/.style = { regular polygon, regular polygon sides=3, inner sep=0, minimum size=1cm}]	\def\stretch{1.3};
	\def\ystretch{1};
	\node at (-.6*\stretch, -1.7*\ystretch) {$Q_1\ =$};
	
	\node  (v6) at (\stretch*0,\ystretch*-1.5) {};
	\node  (v7) at (\stretch*1,\ystretch*-1.5) {};
	\node (v8) at (\stretch*2,\ystretch*-1.5) {};
	\node (v9) at (\stretch*3,\ystretch*-1.5) {};
	\node  (v10) at (\stretch*4,\ystretch*-1.5) {};
	\node [draw,triangle,inner sep=-1]  (e1) at (\stretch*0,\ystretch*-2) {$E$};
	\node [draw,triangle,inner sep=-1]  (e2) at (\stretch*1,\ystretch*-2) {$E$};
	\node [draw,triangle,inner sep=-1]  (e3) at (\stretch*2,\ystretch*-2) {$D$};
	\node [draw,triangle,inner sep=-1]  (e4) at (\stretch*3,\ystretch*-2) {$E$};
	\node [draw,triangle,inner sep=-1]  (e5) at (\stretch*4,\ystretch*-2) {$E$};
	
	\foreach \i in {6,7,8,9,10}{
		\fill (v\i) circle (2pt);
	}
	\path 
	(v6) edge	[->,below]  node  {$a$} (v7)
	(v7) edge	[->,below]  node  {$a$} (v8)
	(v8) edge	 [->,below]    node  {$a$} (v9) 
	(v9) edge	 [->,below]    node  {$a$} (v10)
	;
	\end{tikzpicture}
	\end{subfigure}
	\begin{subfigure}{\linewidth}\centering
		\begin{tikzpicture}[auto,>=latex]
				\def\stretch{1.4};
		\def\ystretch{1};
		\node at (-.75*\stretch, -.7*\ystretch) {}; 		\node at (-.75*\stretch, -1.3*\ystretch) {$D\ =$};
				\node [] (v2) at (\stretch*1,\ystretch*-1) {};
		\node [] (g1) at (\stretch*-.7,\ystretch*-2) {};
		\node [] (g2) at (\stretch*0,\ystretch*-2) {};
		\node [] (g3) at (\stretch*.5,\ystretch*-2) {$\ldots$};
		\node [] (g4) at (\stretch*.9,\ystretch*-2) {};
				
		\fill (v2) circle (2pt);
		\fill (g1) circle (2pt);
		\fill (g2) circle (2pt);
		
		\fill (g4) circle (2pt);
		\path (v2) edge	[->,left]  node [yshift=4pt] {$x_1$} (g1);
		\path (v2) edge	[->,right]  node [yshift=-1.5pt] {$x_2$} (g2);
		\path (v2) edge	[->,right]  node [yshift=-1.5pt] {$x_n$} (g4);
				\node [] (b1) at (\stretch*-.7,\ystretch*-3) {};
		\node [] (b2) at (\stretch*0,\ystretch*-3) {};
		\node [] (b3) at (\stretch*.5,\ystretch*-3) {$\ldots$};
		\node [] (b4) at (\stretch*.9,\ystretch*-3) {};
		
								\foreach \i in {1,2,4}{
			\fill (b\i) circle (2pt);
						\path (g\i) edge	[->,left]  node[xshift=2pt]  {\small $t+f$} (b\i);
					}
		
				\node [] (y1) at (\stretch*1.85,\ystretch*-2) {};
				\node [] (y3) at (\stretch*2.5,\ystretch*-2) {$\ldots$};
		\node [] (y4) at (\stretch*3.15,\ystretch*-2) {};
				\foreach \i in {1,4}{
			\fill (y\i) circle (2pt);
		}
		\path (v2) edge	[->,right]  node  {$y_1$} (y1);
				\path (v2) edge	[->]  node [yshift=-2pt] {$y_\ell$} (y4);
				
		\node [label=below:{$y_{1,t}$}] (x1) at (\stretch*1.6,\ystretch*-3) {};
		\node [label=below:{$y_{1,f}$}] (x2) at (\stretch*2.1,\ystretch*-3) {};
		\node [] (x3) at (\stretch*2.5,\ystretch*-3) {$\ldots$};
		\node [label=below:{$y_{\ell,t}$}] (x4) at (\stretch*2.9,\ystretch*-3) {};
		\node [label=below:{$y_{\ell,f}$}] (x5) at (\stretch*3.4,\ystretch*-3) {};
		\foreach \i in {1,2,4,5}{
			\fill (x\i) circle (2pt);
		}
		\path (y1) edge	[->,left]  node  {$t$} (x1);
		\path (y1) edge	[->,right]  node  {$f$} (x2);
		\path (y4) edge	[->,left]  node  {$t$} (x4);
		\path (y4) edge	[->,right]  node  {$f$} (x5);
										\end{tikzpicture}
	\end{subfigure}
	\begin{subfigure}{\linewidth}\centering
		\begin{tikzpicture}[auto,>=latex]
				\def\stretch{1.4};
		\def\ystretch{1};
		\node at (-.75*\stretch, -1.3*\ystretch) {$E\ =$};
				\node [] (v2) at (\stretch*1,\ystretch*-1) {};
		\node [] (g1) at (\stretch*-.7,\ystretch*-2) {};
		\node [] (g2) at (\stretch*0,\ystretch*-2) {};
		\node [] (g3) at (\stretch*.5,\ystretch*-2) {$\ldots$};
		\node [] (g4) at (\stretch*.9,\ystretch*-2) {};
				
		\fill (v2) circle (2pt);
		\fill (g1) circle (2pt);
		\fill (g2) circle (2pt);
		
		\fill (g4) circle (2pt);
		\path (v2) edge	[->,left]  node [yshift=4pt] {$x_1$} (g1);
		\path (v2) edge	[->,right]  node [yshift=-1.5pt] {$x_2$} (g2);
		\path (v2) edge	[->,right]  node [yshift=-1.5pt] {$x_n$} (g4);
				\node [] (b1) at (\stretch*-.7-0.2,\ystretch*-3) {};
		\node [] (b2) at (\stretch*0-0.2,\ystretch*-3) {};
		\node [] (b3) at (\stretch*.5,\ystretch*-3) {$\ldots$};
		\node [] (b4) at (\stretch*.9-0.2,\ystretch*-3) {};
		
		\node [] (n1) at (\stretch*-.7+0.2,\ystretch*-3) {};
		\node [] (n2) at (\stretch*0+0.2,\ystretch*-3) {};
		\node [] (n4) at (\stretch*.9+0.2,\ystretch*-3) {};
		\foreach \i in {1,2,4}{
			\fill (b\i) circle (2pt);
			\fill (n\i) circle (2pt);
			\path (g\i) edge	[->,left]  node  {$t$} (b\i);
			\path (g\i) edge	[->,right]  node  {$f$} (n\i);
		}
		
				\node [] (y1) at (\stretch*1.85,\ystretch*-2) {};
				\node [] (y3) at (\stretch*2.5,\ystretch*-2) {$\ldots$};
		\node [] (y4) at (\stretch*3.15,\ystretch*-2) {};
				\foreach \i in {1,4}{
			\fill (y\i) circle (2pt);
		}
		\path (v2) edge	[->,right]  node  {$y_1$} (y1);
				\path (v2) edge	[->]  node [yshift=-2pt] {$y_\ell$} (y4);
				
		\node [label=below:{$y_{1,t}$}] (x1) at (\stretch*1.6-.3,\ystretch*-3) {};
		\node [label=below:{$y_{1,f}$}] (x2) at (\stretch*2.1+.2,\ystretch*-3) {};
		\node [] (x3) at (\stretch*2.5,\ystretch*-3) {$\ldots$};
		\node [label=below:{$y_{\ell,t}$}] (x4) at (\stretch*2.9-.2,\ystretch*-3) {};
		\node [label=below:{$y_{\ell,f}$}] (x5) at (\stretch*3.4+.3,\ystretch*-3) {};
		\foreach \i in {1,2,4,5}{
			\fill (x\i) circle (2pt);
		}
		\path (y1) edge	[->,left, bend right = 10]  node [xshift=1.5pt] {$t$} (x1);
		\path (y1) edge	[->,right, bend left = 10]  node [xshift=-4pt, yshift=-5pt] {$f$} (x1);
		\path (y1) edge	[->,left, bend right = 10]  node [xshift=4pt, yshift=-5pt] {$t$} (x2);
		\path (y1) edge	[->,right, bend left = 10]  node [xshift=-1.5pt]{$f$} (x2);
		\path (y4) edge	[->,left, bend right = 10]  node [xshift=1.5pt] {$t$} (x4);
		\path (y4) edge	[->,right, bend left = 10]  node [xshift=-4pt, yshift=-5pt]{$f$} (x4);
		\path (y4) edge	[->,left, bend right = 10]  node [xshift=4pt, yshift=-5pt]{$t$} (x5);
		\path (y4) edge	[->,right, bend left = 10]  node [xshift=-1.5pt]{$f$} (x5);
										\end{tikzpicture}
	\end{subfigure}
	\caption{Query $Q_1$ used in the proof of Theorem~\ref{theo:A-in-a} and the gadgets $D$ and $E$ used in $Q_1$.}
	\label{fig:queryA:Q1-DE}
\end{figure}
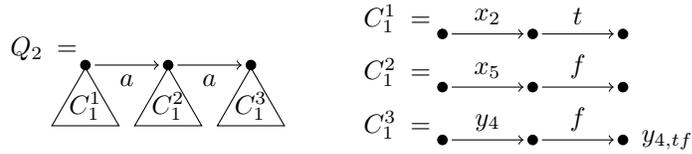
\begin{figure}[t]
	\begin{minipage}{.45\linewidth}
		\centering
		\begin{tikzpicture}[auto, 	triangle/.style = { regular polygon, regular polygon sides=3, inner sep=0, minimum size=1cm}]				\def\stretch{1.1};
		\def\ystretch{1.1};
		
		\node at (.5*\stretch, -.6*\ystretch) {$Q_2\ =$};
		\node [] (v16) at (\stretch*1,\ystretch*-.8) {};
		\node [] (v17) at (\stretch*2,\ystretch*-.8) {};
		\node [] (v18) at (\stretch*3,\ystretch*-.8) {};
		\node [draw,triangle,inner sep=-1]  (e1) at (\stretch*1,\ystretch*-1.3) {$C^1_1$};
		\node [draw,triangle,inner sep=-1]  (e2) at (\stretch*2,\ystretch*-1.3) {$C^2_1$};
		\node [draw,triangle,inner sep=-1]  (e3) at (\stretch*3,\ystretch*-1.3) {$C^3_1$};
		
		\foreach \i in {16,17,18}{
			\fill (v\i) circle (2pt);
		}
		\path 
		(v16) edge	[->,swap]  node  {$a$} (v17)
		(v17) edge	[->,swap]  node  {$a$} (v18)
		;
		\end{tikzpicture}
					\end{minipage}
	\begin{minipage}{.45\linewidth}
		\centering
		\begin{subfigure}{\textwidth}
			\begin{tikzpicture}[auto]
						\def\stretch{1.2};
			\def\ystretch{1};
			\node at (-.5*\stretch, .2*\ystretch) {$C_1^1\ =$};
			\node [] (v1) at (\stretch*0,0) {};
			\node [] (v2) at (\stretch*1,0) {};
			\node [] (v3) at (\stretch*2,0) {};
			\foreach \i in {1,2,3}{
				\fill (v\i) circle (2pt);
			}
			\path 
			(v1) edge	[->]  node  {$x_2$} (v2)
			(v2) edge	 [->]    node  {$t$} (v3) 
			;
			\end{tikzpicture}
		\end{subfigure}
		\begin{subfigure}{\textwidth}
			\begin{tikzpicture}[auto]
						\def\stretch{1.2};
			\def\ystretch{1};
			\node at (-.5*\stretch, .2*\ystretch) {$C_1^2\ =$};
			\node [] (v1) at (\stretch*0,0) {};
			\node [] (v2) at (\stretch*1,0) {};
			\node [] (v3) at (\stretch*2,0) {};
			\foreach \i in {1,2,3}{
				\fill (v\i) circle (2pt);
			}
			\path 
			(v1) edge	[->]  node  {$x_5$} (v2)
			(v2) edge	 [->]    node  {$f$} (v3) 
			;
			\end{tikzpicture}
		\end{subfigure}	
		\begin{subfigure}{\textwidth}
			\begin{tikzpicture}[auto]
						\def\stretch{1.2};
			\def\ystretch{1};
			\node at (-.5*\stretch, .2*\ystretch) {$C_1^3\ =$};
			\node [] (v1) at (\stretch*0,0) {};
			\node [] (v2) at (\stretch*1,0) {};
			\node [label=right:{$y_{4,tf}$}] (v3) at (\stretch*2,0) {};
			\foreach \i in {1,2,3}{
				\fill (v\i) circle (2pt);
			}
			\path 
			(v1) edge	[->]  node  {$y_4$} (v2)
			(v2) edge	 [->]    node  {$f$} (v3) 
			;
			\end{tikzpicture}
		\end{subfigure}
	\end{minipage}
	\caption{Example of $Q_2$ in the proof of Theorem~\ref{theo:A-in-a}
		for the formula $\varphi = (x_2 \vee\neg x_5 \vee \neg y_4 )$.}
	\label{fig:queryA:q2}
\end{figure}

On the other hand, even if we now allow arbitrary \crpqs on the left,
containment remains in $\pitwo$.
\begin{theorem}\label{theo:crpq-in-a}
	Containment of $\crpq$ in $\crpq(a)$ is $\pitwo$-complete. 
\end{theorem}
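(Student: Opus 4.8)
The lower bound is immediate: $\crpq(A)$ is a fragment of $\crpq$, and by Theorem~\ref{theo:A-in-a} containment of $\crpq(A)$ in $\crpq(a)$ is already $\pitwo$-hard, so containment of $\crpq$ in $\crpq(a)$ is $\pitwo$-hard as well.

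For the upper bound I would use the canonical-model characterization from Section~\ref{sec:preliminaries}: $Q_1 \not\subseteq Q_2$ iff there is a knowledge base $K$ that is $\nu$-canonical for $Q_1$ with $(\nu(x_1),\dots,\nu(x_n)) \notin \ans(Q_2,K)$. Since $Q_2\in\crpq(a)$ is essentially a conjunctive query, writing $G_2$ for the edge-labelled graph obtained by reading the atoms of $Q_2$ as edges, the condition "$(\nu(\vec x))\in\ans(Q_2,K)$'' just says that $G_2$ maps homomorphically into $K$ sending each distinguished variable to the prescribed node; for an explicitly given $K$ this is an \np question. The plan is therefore to prove a polynomial-size witness lemma and plug it into a $\forall\exists$ algorithm.

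\emph{Witness lemma:} if $Q_1\not\subseteq Q_2$ then some counterexample $K$ can be chosen in which every one of its constituent simple paths has length polynomial in $\lvert Q_1\rvert+\lvert Q_2\rvert$. Granting this, containment is in $\pitwo$ via the following procedure. Universally guess a polynomial-size candidate $K$: a quotient of the variable set of $Q_1$ (which fixes the glued nodes, hence the images $\nu(x_j)$) together with, for each atom $y_{2i-1}R_i y_{2i}$, a word $w_i$ of polynomial length, and check in \ptime that $w_i\in L(R_i)$ for all $i$ and that the quotient indeed yields a $\nu$-canonical knowledge base; then existentially guess a homomorphism $G_2\to K$ and verify in \ptime that it respects the distinguished variables. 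By the witness lemma, $Q_1\subseteq Q_2$ iff every guessed $K$ that is a legal canonical model admits such a homomorphism, i.e. a $\forall$(legal $K$)$\,\Rightarrow\,\exists$(good homomorphism) statement whose matrix is in \np, which puts the problem in $\pitwo$.

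To prove the witness lemma I would take a counterexample $K$ minimizing the total length of its paths and argue that no path can be long. Fix a path $\pi=v_0\cdots v_N$ carrying the regular expression $R$, an NFA $A$ for $L(R)$ with linearly many states, and an accepting run of $A$ on $\lab(\pi)$. If $N$ exceeded a suitable polynomial in the number of states of $A$ and in $\lvert Q_2\rvert$, a pigeonhole argument on the pairs (state of $A$ reached at position $j$, labelled radius-$O(\lvert Q_2\rvert)$ neighbourhood of $v_j$ along $\pi$) would produce positions $j<j'$ agreeing on both; excising the stretch between $v_j$ and $v_{j'}$ and identifying the two nodes yields a strictly smaller $K'$ whose path is still labelled by a word of $L(R)$ (the $A$-states agree), so minimality forces $G_2\to K'$. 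The crux — and the step I expect to be the main obstacle — is to transport such a homomorphism back to $K$, yielding the desired contradiction: the identified node of $K'$ has an incoming edge inherited from the $v_j$-side and an outgoing edge inherited from the $v_{j'}$-side, and a homomorphism from $Q_2$ might route a directed path of $Q_2$ through exactly this newly created "shortcut'', which need not exist in $K$. This is precisely where matching neighbourhoods are used: because $Q_2$ has only $O(\lvert Q_2\rvert)$ atoms, every connected fragment of $G_2$ lives inside a bounded-radius ball, so if the radius-$O(\lvert Q_2\rvert)$ neighbourhoods of $v_j$ and $v_{j'}$ in $\pi$ are isomorphic over their common interface with the rest of $K$, every use of the shortcut in $K'$ can be replayed verbatim inside $K$. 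Making this transport argument precise — and in particular checking that enough "safe'' pairs $j<j'$ remain once $\pi$ crosses the polynomial threshold, and handling fragments of $G_2$ that reach the endpoints of $\pi$ and hence attach to the remaining paths of $K$ — is the technical heart of the proof.
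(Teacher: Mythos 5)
Your lower bound and your overall $\forall\exists$ architecture are fine, and the witness lemma you aim for is in fact true. The gap is in your proof of that lemma: the pigeonhole you invoke is over pairs \emph{(NFA state at position $j$, labelled radius-$O(|Q_2|)$ neighbourhood of $v_j$)}, and the second coordinate ranges over all words of length $\Theta(|Q_2|)$ over the (up to $|Q_1|+|Q_2|$-letter) alphabet occurring in the queries. That is exponentially many pigeonholes, so two positions agreeing on both coordinates are only guaranteed once $\pi$ has \emph{exponential} length. Your surgery/transport argument is then sound, but it only bounds the paths of a minimal counterexample exponentially, which does not support a polynomial-size universal guess and hence does not yield $\pitwo$ membership. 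The missing idea is that the full radius-$|Q_2|$ neighbourhood is far more information than you need: a connected component $Q_2'$ of a $\crpq(a)$ query whose image lies strictly in the interior of a single simple path must map homomorphically into a directed path, i.e.\ it is homomorphically equivalent to a single labelled path $\hat w$ with $|\hat w|\le|Q_2|$. So the only thing to track at position $j$ of $\pi$, besides the NFA state of $R$, is the state of the ($O(|Q_2|)$-state) automaton recognizing $\Sigma^*\hat w\Sigma^*$. Pigeonholing over these polynomially many pairs gives the polynomial length bound (equivalently, the shortest word in $u_A\Sigma^*v_A\cap L(R)\cap(\Sigma^*\hat w\Sigma^*)^c$ is polynomial because that intersection has a polynomial-size NFA). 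Components whose image contains a junction node are confined to the radius-$|Q_2|$ balls around junctions and are handled separately, as you anticipate.

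For comparison, the paper never materializes a counterexample at all: it gives a \sigmatwo algorithm for \emph{non}-containment that guesses, for each atom $A=xRy$ of $Q_1$, boundary words $u_A,v_A$ of length at most $|Q_2|$ with $u_A\Sigma^*v_A\cap L(R)\neq\emptyset$, abstracts the long middle by a fresh symbol $\#$, guesses a connected component $Q_2'$ of $Q_2$, and checks (1) that $Q_2'$ does not embed into the resulting polynomial-size abstraction (an \np test) and (2) that for each long atom the middle can be realized so as to avoid $Q_2'$ — which, by the path-core observation above, reduces to non-emptiness of $u_A\Sigma^*v_A\cap L(R)\cap(\Sigma^*\hat w\Sigma^*)^c$. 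If you incorporate the path-core observation into your pigeonhole (replacing neighbourhoods by factor-automaton states), your explicit-witness route goes through and is essentially a reformulation of the same argument; without it, the proof as written does not establish the polynomial bound.
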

\begin{proof}
  The lower bound is immediate from Theorem~\ref{theo:A-in-a}. For the upper
  bound, we provide a $\sigmatwo$ algorithm for non-containment, which yields
  the result. Let $Q_1 \in \crpq$, $Q_2 \in \crpq(a)$, and $\#$ be a symbol not
  appearing in $Q_1$ or $Q_2$. For every atom $A=x R y$ of $Q_1$ we guess words
  $u_A$ and $v_A$ of length $\leq |Q_2|$ such that $u_A \Sigma^* v_A \cap L(R)
  \neq \emptyset$ and $|u_A v_A|<2|Q_2|$ implies that $u_A v_A \in L(R)$. We guess
  a component $Q'_2$ of $Q_2$ and we check that
  \begin{enumerate}[(1)]
	\item $Q'_2$ cannot be embedded in $Q'_1$, where $Q'_1$ is the KB resulting
    from replacing each atom $A= x R y$ with the path $u_A \cdot s_\# \cdot
    v_A$, where $s_\# = \varepsilon$ if $|u_Av_A| < 2 |Q_2|$ and $s_\# = \#$
    otherwise; and
	\item for every atom $A = x R y$ of $Q_1$ such that $|u_Av_A|= 2|Q_2|$ there
    is $w \in u_A \Sigma^* v_A \cap L(R)$ such that $Q'_2$ cannot be embedded in
    $w$. This last test amounts to checking that either (i) $Q'_2$ is not
    homomorphically equivalent to a path or, otherwise, (ii) if $Q'_2$ is
    homomorphically equivalent to a path with label $\hat w$,
		we test $u_A \Sigma^* v_A \cap L(R) \cap (\Sigma^*\hat w \Sigma^*)^c \neq \emptyset$.
  \end{enumerate}
  If tests (1) and (2) succeed, we found a knowledge base into which $Q_1$ can
  be embedded, but not $Q_2$. Testing whether $Q'_2$ can be homomorphically
  embedded in $Q'_1$ is in \np as the size of $Q'_1$ is polynomial in $Q_1$ and
  $Q_2$. Test (2) is in \conp as we need to check for an embedding of $Q'_2$ for
  each atom of $Q_1$.
\end{proof}

Allowing disjunctions in the right query is rather harmless if we only need to
consider polynomial-size canonical models to decide containment correctly. Even
if such canonical models may become exponentially large, they can sometimes be
encoded using polynomial size, allowing for $\pitwo$ containment algorithms
(cf.\ Corollary~\ref{cor:A-astar-in-crpq}, Theorem~\ref{theo:w-in-crpq}).
However, if we have arbitrary queries on the left, these techniques do not work
anymore, to the extent that
the problem becomes \pspace-complete.

The following theorem can be regarded as a generalization of the result of
\citet{BjorklundMS-mfcs13} [Theorem 9] stating that
the inclusion problem between a DFA over an alphabet $\Sigma=\{a,b,c\}$ and a
regular expressions of the form $\Sigma^* a \Sigma^n b \Sigma^*$
is \pspace-complete. \begin{theorem}\label{theo:crpq-in-A}
	$\querycon$ of $\crpq$ in $\crpq(A)$ is \pspace-complete, even if the size of the alphabet
  is fixed.
\end{theorem}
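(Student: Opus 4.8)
The plan is to prove matching \pspace bounds; the upper bound is the more substantial half. For membership I would show that \emph{non}-containment is in \textsc{NPSpace} and invoke Savitch's theorem. Fix $Q_1\in\crpq$, $Q_2\in\crpq(A)$, and set $L:=|Q_2|$. A canonical model $K$ of $Q_1$ is built from at most $|Q_1|$ ``branching nodes'' $\mathcal B$ (the $\nu$-images of the variables of $Q_1$) together with one internally disjoint simple path $\pi_A$, labelled by some $w_A\in L(R)$, per atom $A=xRy$. The key observation is that, since every atom of $Q_2$ matches a \emph{single} edge, a connected component $C$ of $Q_2$ (which has $\le L$ edges) can embed into $K$ only ``locally'': given an embedding $h$, the $C$-vertices outside $h^{-1}(\mathcal B)$ fall into connected pieces, each of which maps inside one path $\pi_A$ (interiors of distinct paths are disconnected) and, being attached to $\mathcal B$ at an endpoint of $\pi_A$ unless it spans $\pi_A$, lies within distance $L$ of an endpoint of $\pi_A$; the only exception is $h^{-1}(\mathcal B)=\emptyset$, in which case all of $C$ maps into one contiguous factor of some $w_A$, and since $C$ is connected this forces a \emph{unique} grading of $C$ and hence amounts to $w_A$ containing a factor in a single product pattern $B_1\cdots B_r$ with $r\le L$. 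Consequently whether $(\nu(x_1),\dots)\in\ans(Q_2,K)$ depends on $K$ only through $\nu$, the short words $w_A$ (those with $|w_A|\le 2L+2$), the length-$\le L{+}1$ prefixes and suffixes of the long $w_A$, and, for each long $w_A$ and each component $C$, whether $w_A$ contains a factor in $C$'s product pattern.

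This yields an \textsc{NPSpace} algorithm for non-containment: guess $\nu$ (a partition of the variables of $Q_1$), guess one component $C$ of $Q_2$, and guess for each atom $A=xRy$ a polynomial-size ``type'' --- a short/long flag plus either the whole $w_A$ (short) or just its length-$\le L{+}1$ prefix and suffix (long). Accept iff (i) every type is realisable --- there is $w\in L(R)$ with the guessed prefix/suffix and, in the long case, of length $>2L+2$ and with \emph{no} factor in $C$'s product pattern --- and (ii) $C$ does not embed, respecting distinguished variables, into the polynomial-size ``core'' $K_0$ assembled from $\nu$, the short words, and the length-$(L{+}1)$ end-stubs of the long paths. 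Test~(ii) is the complement of an \np\ test on a polynomial structure. Test~(i) is emptiness of a product of a polynomial-size NFA for $R$, a polynomial-size NFA fixing prefix/suffix/length, and the \emph{succinct} DFA ``avoid the pattern $B_1\cdots B_r$'' (whose states are the subsets of $\{1,\dots,r\}$, i.e.\ $O(L)$ bits each), and emptiness of such a succinct product is in \textsc{NPSpace}. Soundness and completeness follow from the locality analysis plus the fact that $(\nu(x_1),\dots)\notin\ans(Q_2,K)$ exactly when \emph{some} component of $Q_2$ fails to embed with the required endpoint constraints. Since only symbols actually occurring in the queries are ever inspected, a fixed alphabet makes no difference.

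For \pspace-hardness over a fixed alphabet I would generalise the \pspace-completeness of ``$L(D)\subseteq\Sigma^* a\,\Sigma^n\, b\,\Sigma^*$'' ($D$ a DFA, $n$ in binary) of \citet{BjorklundMS-mfcs13}. The right-hand side is a single \emph{product} pattern, hence matched \emph{somewhere} by one freely floating connected component $C$ of a Boolean $Q_2\in\crpq(A)$; because $C$ floats, ``$C$ embeds into $\mathrm{path}(w)$'' is exactly ``$w\in\Sigma^*a\Sigma^n b\Sigma^*$''. The left-hand query $Q_1$ is Boolean and designed so that its canonical models are precisely the paths labelled by $L(D)$: one works over the enlarged (still constant-size) alphabet $\Sigma\times Q_D$, so that a single starred atom with a \emph{polynomial}-size regular expression suffices to ``unfold'' $D$ (its body enforcing the purely local condition ``consecutive states are $\delta$-consistent'', the initial state, and reaching an accepting state). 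Then $Q_1\subseteq Q_2$ iff every $D$-word contains the forbidden $(a,b)$-pair at distance $n$, i.e.\ iff $L(D)\subseteq\Sigma^*a\Sigma^n b\Sigma^*$.

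I expect the main obstacle to be that the gap $n$ is presented succinctly, so $C$ cannot literally contain $n$ edges: the width-$n$ product pattern has to be re-encoded at polynomial size by threading positional information (binary counters) through the labels of $Q_1$'s canonical models and letting $C$ perform only bounded-range comparisons near an $a$-cell and near a $b$-cell, with counter-consistency carried along through the constant-size local alphabet encoding. This is delicate precisely because $\crpq(A)$-components are so weak --- a connected component matches only a \emph{product} (intersection-free) pattern along a path, and $\crpq$ has no disjunction --- so every consistency check in the construction must be massaged into product-pattern form. That same limitation is exactly what keeps the complexity at \pspace: it is the absence of Kleene stars over sets on the right (unlike the $\crpq(a,A^*)$ fragment, where $A^*$-atoms restore enough power) that rules out the long-range equality tests behind the \expspace lower bound.
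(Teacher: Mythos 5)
Your upper bound is essentially the paper's: the paper obtains membership from the more general Theorem~\ref{theo:crpq-in-A-astar}, whose \pspace algorithm for non-containment likewise guesses, per atom, short prefix/suffix words $u_A,v_A$, checks non-embeddability of a guessed component of $Q_2$ into the resulting polynomial-size ``core'', and separately certifies that some word in $u_A\Sigma^*v_A\cap L(R)$ admits no embedding of that component in its middle. Your locality analysis and the succinct pattern-avoidance automaton are a correct way to implement that last test for $\crpq(A)$, so this half is fine.

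The lower bound, however, has a genuine gap. You base it on $L(D)\subseteq\Sigma^*a\Sigma^nb\Sigma^*$ with $n$ \emph{in binary} and then correctly observe that a $\crpq(A)$ component cannot contain $n$ atoms; but the repair you sketch --- threading a binary counter through the labels of $Q_1$'s canonical models and having $C$ do ``bounded-range comparisons'' --- does not go through. The counter consistency must be enforced by $Q_1$ itself (it is the universally quantified side: every canonical model must be handled, so bad counter encodings cannot be filtered out by $Q_2$), and a regular expression that verifies that consecutive $\log n$-bit blocks increment correctly must remember an entire block, i.e.\ needs on the order of $n$ states, which is exponential in $|\mathrm{bin}(n)|$. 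On the other side, a $\crpq(A)$ component matches only a polynomial-length product pattern $B_1\cdots B_r$ over a constant alphabet and cannot perform the long-range counter equality test either. So neither side can carry the succinct distance, and no concrete construction is given that resolves this. The difficulty is self-inflicted: the paper reduces directly from corridor tiling with the row width $n$ given \emph{in unary} (which is already \pspace-complete), encodes tiles over a fixed alphabet, lets $Q_1$ be a single regular-expression atom describing all horizontally consistent candidate tilings, and takes $Q_2$ to be one free-floating path of $(2n-1)|T|$ set-labelled atoms that matches exactly a vertical-constraint violation --- i.e.\ your ``distance-$n$ forbidden pair'' idea, but with a polynomial distance so that the right-hand component can literally spell it out. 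If you restate your reduction from a source problem with a unary distance parameter (corridor tiling, or the \citet{BjorklundMS-mfcs13} problem with $n$ in unary), your argument becomes correct and coincides with the paper's; as written, the hardness half is not established.
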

\begin{proof}
	    	  The upper bound follows from Theorem~\ref{theo:crpq-in-A-astar}, which we
  prove later. For the lower bound we reduce from the corridor tiling problem, a
  well-known \pspace-complete problem \citep{Chlebus-jcss86}. An instance of this
  problem is a tuple $(T,H,V,\bar{i},\bar{f},n)$, where $T$ is the set of tiles,
  $H,V \subseteq T \times T$ are the horizontal and vertical constraints,
  encoding which tiles are allowed to occur next to each other and on top of
  each other, respectively, $\bar{i} =i_1\dots i_n \in T^n$ is the initial row,
  $\bar{f} = f_1\dots f_n \in T^n$ is the final row, and $n$ encodes the length
  of each row in unary. The question is whether there exists a tiling solution,
  that is, an $N \in \nat$ and a function $\tau: \set{1,\dotsc, N} \times \set{1,\dotsc, n} \to T$
  such that $\tau(1,1) \dotsb \tau(1,n) = \bar i$, $\tau(N,1) \dotsb \tau(N,n) =
  \bar f$ and all horizontal and vertical constraints are satisfied:
  $(\tau(i,j),\tau(i,j+1)) \in H$ and $(\tau(i,j),\tau(i+1,j)) \in V$ for every
  $i,j$ in range.

	The coding idea is that the query $Q_1$ is
  a string describing all tilings with correct start and end tiles, with no
  horizontal errors, and having rows of the correct length. The query $Q_2$
  describes vertical errors. Then we have $Q_1 \subseteq Q_2$ if and only if
  there exists no valid tiling, i.e., every tiling has an error.

  Let $(T,H,V,\bar{i},\bar{f},n)$ be a corridor tiling instance as defined
  before. From the original proof of \citet{Chlebus-jcss86}, it follows that the
  following restricted version of corridor tiling remains \pspace-complete. The
  set of tiles $T$ is partitioned into $T=T_1 \uplus T_2 \uplus T_3 $, such that
  each row in a solution must belong to $T_1^* T_2 T_1^* \cup T_1^* T_3T_3
  T_1^*$. The original proof furthermore implies, that (i) $(T_1 \times T_1)
  \cup (T_1 \times T_2) \cup (T_2 \times T_1) \subseteq H$; and (ii) for all
  $u,v \in T_3$ with $(u,v)\in H$ we have that $T_1 \times \{u\} \subseteq H$
  and $\{v\} \times T_1 \subseteq H$. This implies that our horizontal errors
  can only occur with $T_2$ or $T_3$ involved, so only once per row. Therefore,
  we construct a new set $\tilde H$ defined as follows: $\tilde H = H \cap (T_2
  \times T_1 \cup T_1 \times T_2 \cup T_3 \times T_3)$. This set is used in
  the definition of query $Q_1$.

	We encode tiles as follows: each tile $t_i$ has an encoding 
	$\widehat{t_i}$ given by $\tilen^{i-1} \tileY
  \tilen^{|T|-i-1} e_1 \cdots e_{|T|}$, 
  where $e_j = \tiley$ if $(t_i, t_j) \in V$
  and $e_j = \tilen$, otherwise. The second half of the encoding of a tile
  describes which tiles are allowed to occur above the tile.
  	The query $Q_1$ is
	\begin{equation*}
	\begin{split}
	\widehat {i_1}\cdots \widehat {i_n} \left(\sum_{i=0}^{n-2} \sum_{(v_1,v_2)\in \tilde H} \! (\widehat T_1)^i  \widehat {v_1}\widehat {v_2} (\widehat T_1)^{n-i-2} \right)^*\! \widehat{f_1}\cdots \widehat {f_n}\; . 
	 \end{split}
 \end{equation*}
 We note that $Q_1$ encodes exactly the tilings without horizontal errors, due
 to the imposed restrictions.
 
 The query $Q_2$ is $\tilen (\tilen + \tiley +\tilen + \tileY)^{(2n-1)|T|-1} \tileY$ and matches
 exactly those positions where a vertical error occurs, exploiting the encoding
 of vertical constraints in the second half of each tile's encoding.
 \end{proof}

\makeatletter{}\section{Simple Transitive Closures}

In this section, we investigate what happens if we consider fragments 
that only allow singleton transitive closures, that is, transitive closures of
single symbols. Our first results imply a number of \pitwo-results in Table~\ref{tab:complexity-smaller}.

\begin{theorem} \label{theo:lowerbound-aastar-in-a}
	$\querycon$ of $\crpq(a,a^*)$ in $\crpq(a)$ is \pitwo-hard, even if the size of the alphabet is fixed.
\end{theorem}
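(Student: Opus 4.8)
The plan is to reduce from $\forall\exists$-QBF (i.e.\ $\Pi_2$-QBF), following the architecture of the proof of Theorem~\ref{theo:A-in-a} but replacing the one ingredient there that is not in $\crpq(a,a^*)$ — the disjunctive $\{t,f\}$-atoms of $Q_1$, which encode the universally quantified variables — by a gadget built only from an $a$-atom, an $a^*$-atom, and one extra ``marker'' edge. Given $\Phi=\forall x_1\dots x_n\,\exists y_1\dots y_\ell\;\varphi$ with $\varphi$ in 3-CNF and clauses $c_1,\dots,c_m$, I would build Boolean queries $Q_1\in\crpq(a,a^*)$ and $Q_2\in\crpq(a)$ with $Q_1\subseteq Q_2$ iff $\Phi$ is true; since $\forall\exists$-QBF is $\pitwo$-complete, this gives the claimed hardness.

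For $Q_1$ I would keep the spine (an $a$-path of length $4$ carrying gadgets $E,E,D,E,E$) and the $y$-parts of the $D$- and $E$-gadgets exactly as in Theorem~\ref{theo:A-in-a}: those $y$-parts use only single-symbol $t$- and $f$-edges, so they already lie in $\crpq(a,a^*)$, and the shared nodes $y_{j,t},y_{j,f}$ still enforce a consistent choice of the existential variables. The new $x$-part: in the $D$-gadget, for each $x_i$ I add an $x_i$-edge from the root to a node $g_i$, an $a^*$-atom $g_i\xrightarrow{a^*}b_i$, and a marker edge $b_i\xrightarrow{\mu}b_i'$; in each $E$-gadget, for each $x_i$ I add an $x_i$-edge from the root to a node $h_i$ with both an outgoing $a$-edge and an outgoing $\mu$-edge. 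The only $a^*$-atoms of $Q_1$ are the $n$ atoms $g_i\xrightarrow{a^*}b_i$, so a canonical model of $Q_1$ is determined by the chosen lengths $k_i\ge 0$ of these $a$-paths. The key point is that this yields an \emph{exact} Boolean switch: if $k_i\ge 1$ then $g_i$ has an outgoing $a$-edge and no outgoing $\mu$-edge; if $k_i=0$ then $g_i$ and $b_i$ collapse, so $g_i$ has an outgoing $\mu$-edge and no outgoing $a$-edge. I read ``$k_i\ge 1$'' as $x_i$ true and ``$k_i=0$'' as $x_i$ false; ranging over canonical models enumerates all truth assignments to $x_1,\dots,x_n$ (values $k_i\ge 2$ realise the same assignment as $k_i=1$ and do no harm).

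For $Q_2\in\crpq(a)$ I would take, as in Theorem~\ref{theo:A-in-a}, one component per clause (sharing only the nodes $y_{k,tf}$): clause $c_i$ gets a spine $s_i^1\xrightarrow{a}s_i^2\xrightarrow{a}s_i^3$, and from $s_i^j$ hangs a $2$-edge path coding the $j$-th literal — $x_k\,a$ for a positive literal $x_k$, $x_k\,\mu$ for $\neg x_k$, and $y_k\,t$ (resp.\ $y_k\,f$) ending in $y_{k,tf}$ for the literal $y_k$ (resp.\ $\neg y_k$). All atoms are single symbols, so $Q_2\in\crpq(a)$. Exactly as in Theorem~\ref{theo:A-in-a}, any embedding of the length-$2$ clause spine into the length-$4$ spine of $Q_1$ forces exactly one of the three literal gadgets of $c_i$ onto the $D$-gadget — in one of three ways, which may be chosen freely — and the other two onto $E$-gadgets, which by construction accept every literal gadget (with $y_{k,tf}$ free to map to $y_{k,t}$ or $y_{k,f}$). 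A positive $x$-literal mapped onto $D$ then embeds iff $k_k\ge 1$, a negative $x$-literal mapped onto $D$ embeds iff $k_k=0$, and a literal on $y_k$ mapped onto $D$ pins $y_{k,tf}$ to $y_{k,t}$ resp.\ $y_{k,f}$; the shared nodes make the induced $y$-assignment consistent across clauses. Unwinding this yields $Q_1\subseteq Q_2$ iff for every assignment to the $x_i$ there is a $y$-assignment satisfying all clauses, i.e.\ iff $\Phi$ holds. As a final step I would reduce the alphabet to a fixed one exactly as in Theorem~\ref{theo:A-in-a}, encoding $x_i$ as the codeword $\tileN^{i-1}\tileY\tileN^{n-i-1}$ and $y_i$ as $\tilen^{i-1}\tiley\tilen^{\ell-i-1}$ and spelling these along paths, which leaves the alphabet $\{a,\mu,t,f,\tileN,\tileY,\tilen,\tiley\}$.

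I expect the main obstacle to be checking that $Q_2$ has no \emph{unintended} homomorphism into a canonical model of $Q_1$. Concretely: since every literal gadget starts with an $x_k$- or $y_k$-edge, and in $Q_1$ such edges occur only at the roots of $D$ and $E$, each clause spine is forced onto the main spine of $Q_1$ — it cannot sit inside the $a$-path of an $a^*$-atom, since the interior nodes of that path have no outgoing $x_k$/$y_k$-edge, so the attached literal gadgets could not embed — after which the intended analysis applies; one also has to confirm that when some $k_i\ge 2$ the extra $a$-edges do not let an $x_k\,\mu$-gadget embed into $D$ where it should not. This verification, together with getting the marker gadget right so that the $a^*$-length truly acts as a clean Boolean switch, is the technical heart; everything else is a routine adaptation of Theorem~\ref{theo:A-in-a}.
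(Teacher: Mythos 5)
Your proposal is correct and takes essentially the same route as the paper: the paper also reduces from $\forall\exists$-QBF via the construction of Theorem~\ref{theo:A-in-a}, replacing the $(t+f)$-atoms in the $D$-gadget by $t^*f$-paths, so that the presence or absence of a first $t$-edge after the $x_i$-edge plays exactly the role of your $a^*\mu$ marker switch (with the minor cosmetic advantage that $Q_2$ is left completely unchanged).
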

\begin{proof}[Proof sketch]
	We use a similar reduction as in Theorem~\ref{theo:A-in-a}. The only
 change we make is that we replace the expressions $t+f$ in $Q_1$ with $t^*
 f$-paths. Intuitively, $Q_1$ sets a variable $x_i$ to true if and only if
 there exists at least one $t$-edge after the $x_i$-edge. The query $Q_2$ is
 not changed.
\end{proof}

\begin{corollary}\label{cor:A-astar-in-crpq}
  Containment of $\crpq(A,a^*)$ in \crpq is in \pitwo.
\end{corollary}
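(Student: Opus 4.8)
The plan is to obtain this as a corollary of the (later) Theorem~\ref{theo:w-in-crpq}, which establishes that containment of the W-fragment of \citet{DeutschT-dbpl01} in arbitrary \crpqs is in \pitwo. The only thing to check is that $\crpq(A,a^*)$ is a subclass of the W-fragment: every atom of a $\crpq(A,a^*)$ query uses a regular expression that is either a disjunction of single symbols or a Kleene star of a single symbol, and both kinds of expression are available in the W-fragment; concatenations of such expressions, if present, are first removed at the cost of a few fresh intermediate variables, exactly as discussed in Section~\ref{sec:mainresults}. Since the left-hand side may then be viewed as a W-fragment query and the right-hand side is already an arbitrary \crpq, Theorem~\ref{theo:w-in-crpq} applies verbatim and yields the claimed \pitwo upper bound.

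For completeness, here is the shape of the algorithm underlying Theorem~\ref{theo:w-in-crpq}, specialised to our situation. By the reduction to canonical models from Section~\ref{sec:preliminaries}, $Q_1 \not\subseteq Q_2$ holds iff there is a canonical model $K$ of $Q_1$, together with its valuation $\nu$, such that $(\nu(x_1),\dots,\nu(x_n)) \notin \ans(Q_2,K)$. A canonical model of $Q_1 \in \crpq(A,a^*)$ is determined by choosing one symbol for each $A$-atom (a single labelled edge) and a length $\ell\geq 0$ for each $a^*$-atom (a fresh simple $a$-path of that length, $\ell=0$ meaning the endpoints coincide). The delicate point is that some $\ell$ may have to be large; the key lemma is that it suffices to consider lengths that are at most singly exponential in $|Q_1|+|Q_2|$ (for this fragment one can in fact get a polynomial bound, but this refinement is not needed), each stored in binary, so that $K$ has a description of polynomial size. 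Given such a compact $K$, one checks in \conp that $Q_2$ does not embed into $K$: an embedding sends each variable of $Q_2$ to a node of $K$, which is either a branching node (an image of a $Q_1$-variable) or an internal node of an $a$-path specified by its position in binary, and an atom $u\,R\,v$ of $Q_2$ is satisfied precisely when the binary-encoded $a$-distance $d$ between the two images is accepted by the NFA for $R$ restricted to $a^*$, which is testable in polynomial time by fast exponentiation of that NFA's transition matrix. Hence existence of an embedding is in \np, its non-existence in \conp, and non-containment reads as ``guess a compact canonical model, then verify that $Q_2$ does not embed'', i.e.\ it lies in \sigmatwo; so containment is in \pitwo.

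The main obstacle is precisely the length bound: if $Q_1 \not\subseteq Q_2$ then some witnessing canonical model uses only $a$-paths of at most singly exponential length. This follows from a semilinearity argument on the tuple $(\ell_1,\dots,\ell_k)$ of chosen path lengths: with the $A$-choices fixed, ``$Q_2$ embeds into the canonical model with lengths $(\ell_1,\dots,\ell_k)$'' is an existential statement over the positions of $Q_2$'s variables subject to constraints that are ultimately periodic (coming from the NFAs of $Q_2$'s regular expressions, and from divisibility by cycle lengths induced by directed cycles of $Q_1$), so the set of embeddable length-tuples is semilinear, and hence so is its complement; a nonempty semilinear set described by polynomially many constraints over numbers of polynomial magnitude contains a point whose coordinates are at most singly exponential, which also justifies the binary encoding in the previous paragraph. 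Isolating and proving this ingredient in the more general presence of wildcards (which is what makes the binary-encoding step genuinely necessary rather than cosmetic) is exactly the content of Theorem~\ref{theo:w-in-crpq}, so the clean way to conclude is simply to invoke it.
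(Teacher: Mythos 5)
Your proposal is correct and matches the paper's proof, which is exactly the one-line observation that $\crpq(A,a^*)$ is a fragment of $\crpq(W)$ and hence Theorem~\ref{theo:w-in-crpq} applies. Your additional sketch of the underlying algorithm (compact canonical models with binary-encoded path lengths, then a \conp{} embedding check) is consistent with the paper's proof of that theorem, though the paper derives the single-exponential length bound via a pigeonhole argument on powers of the NFA transition matrix rather than via semilinearity; since you ultimately just invoke the theorem, this difference is immaterial to the corollary.
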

\begin{proof}
  This will be a corollary of Theorem~\ref{theo:w-in-crpq}, since 
  $\crpq(A,a^*)$ is a fragment of $\crpq(W)$.
\end{proof}

On the other hand, if we allow arbitrary queries on the left and simple transitive closure on the right-hand query, the problem becomes \pspace-hard.
\begin{theorem}\label{theo:crpq-in-a-astar}
	$\querycon$ of $\crpq$ in $\crpq(a,a^*)$  is \pspace-complete, even if the size of the alphabet is fixed. 
\end{theorem}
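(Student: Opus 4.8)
The plan is to establish the two bounds separately; the interesting direction is the lower bound.

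\textbf{Upper bound.} Every atom of the form $a$ or $a^*$ is in particular of the form $A$ or $a^*$, so $\crpq(a,a^*)$ is a fragment of $\crpq(A,a^*)$ and \pspace-membership is immediate from Theorem~\ref{theo:crpq-in-A-astar}.

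\textbf{Lower bound.} I would reduce from corridor tiling, following the blueprint of the proof of Theorem~\ref{theo:crpq-in-A} (itself a generalisation of the DFA-vs-$\Sigma^* a \Sigma^n b \Sigma^*$ construction of \citet{BjorklundMS-mfcs13}). As there, one builds $Q_1$ so that its canonical models are exactly the strings that encode \emph{candidate} tiling solutions of the given instance — correct first and last row, rows of the prescribed length, and no horizontal errors; in the restricted variant of \citet{Chlebus-jcss86} a horizontal error can occur only inside one bounded region of a row, so this is a regular property and can be hard-coded into the regular expression of $Q_1$ — and one builds $Q_2$ so that it embeds into such an encoding precisely when the encoded candidate has a \emph{vertical} error. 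Then $Q_1\subseteq Q_2$ iff every candidate has a vertical error iff the instance has no solution, which is \pspace-complete; and since $Q_2$ is path-shaped while the canonical models of $Q_1$ are paths, ``$Q_2$ embeds'' just means ``$Q_2$ occurs as a factor'', so no leading or trailing $\Sigma^*$ is needed.

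The step where the $\crpq(A)$ construction does not carry over is the detection of a vertical error: there $Q_2$ is a single atom $\sigma\, A^{k}\,\tau$ that uses a fixed-length ``wildcard'' block $A^{k}$ (with $k$ polynomial) to line the encoding of a cell up with that of the cell one row above it, and such a block is not available with $a$- and $a^*$-atoms over a fixed alphabet. The plan to simulate it is to work with a heavily \emph{padded, uniformised} encoding: interleave short ``cell records'' of a fixed length, over a small fixed alphabet, with long runs of a single filler symbol $\#$; rewrite the instance so that all cell records have exactly the same shape (for example, so that every tile forbids the same number of tiles above it), which turns ``skip one cell record'' into one fixed, polynomial-size pattern built from single-symbol atoms and $\#^*$-atoms; and let $Q_2$ be the chain that reads the two markers of interest in the lower and upper cell records with single-symbol atoms, crosses the intervening $\#$-runs with $\#^*$-atoms, and crosses the $n-1$ intervening cell records with $n-1$ copies of the fixed skip gadget. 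The ``one row apart, same column'' alignment that was previously guaranteed by the block length being exactly $k$ is now forced by a Björklund-style distance computation on the (fixed) lengths of the cell records, so that the chain fits precisely when a forbidden-above marker of the lower cell lines up with the identity marker of the upper cell — i.e.\ exactly at a vertical error — and as in Theorem~\ref{theo:crpq-in-A} one checks that the well-formedness enforced by $Q_1$ blocks all spurious embeddings of $Q_2$.

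The main obstacle is this simulation of the wildcard block. One has to (a) choose the encoding and the uniformisation so that $|T|$ and the row length grow only polynomially, keeping $Q_2$ of polynomial size and the alphabet constant; (b) make the cell records rigid enough that the $\#^*$-atoms of $Q_2$ cannot ``slide'', so that \emph{every} embedding of $Q_2$ really does pin the two compared markers to the same within-record offset (otherwise one gets false vertical-error matches); and (c) ensure that the format, first/last-row and horizontal-error checks can still be expressed inside the regular expression of $Q_1$ once the encoding is padded. The rest — the corridor-tiling set-up, the bookkeeping in $Q_1$'s regular expression, and the final correctness argument — is a routine adaptation of the proof of Theorem~\ref{theo:crpq-in-A}.
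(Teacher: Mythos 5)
Your proposal matches the paper's: the upper bound is inherited from Theorem~\ref{theo:crpq-in-A-astar}, and the lower bound adapts the Theorem~\ref{theo:crpq-in-A} corridor-tiling reduction by padding the encoding with a separator symbol so that the fixed-length wildcard block $A^k$ can be simulated by $k$ copies of a gadget made of single-symbol stars anchored on that separator. The paper's instantiation of your ``skip gadget'' is considerably simpler than what you sketch and already resolves your obstacle (b): it suffices to replace every symbol $\sigma$ of the encoding by $\sigma\$$ for a fresh symbol $\$$ and each wildcard position of $Q_2$ by $\tileN^*\tileY^*\tilen^*\tiley^*\$$, which is forced to consume exactly one symbol--separator pair because the starred atoms cannot absorb $\$$, so no uniformisation of cell records or long filler runs is needed.
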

\begin{proof}[Proof sketch]
	We adapt the encoding in the proof of Theorem~\ref{theo:crpq-in-A}, by (a) replacing each symbol 
	$\sigma \in  \{\tileN,\tileY,\tilen,\tiley\}$ with $\sigma\$$, where $\$$ is a new symbol, 
	and (b) replacing $Q_2$ with $\tilen\$(\tileN^*\tileY^*\tilen^*\tiley^*\$)^{(2K-1)|T|-1} \tileY\$$.
\end{proof}

Interestingly, the complexity of containment can drop by adding distinguished
variables to the query:
\begin{propositionrep}
	The complexity of $\querycon$ of (1) $\crpq$ in $\crpq(A)$ and  (2) $\crpq$ in
  $\crpq(a,a^*)$ is in $\pitwo$ if every component of each query contains at least one
  distinguished variable. 
\end{propositionrep}
\begin{proof}
	The main reason for this drop of complexity is that the queries on the right side allow only very restricted navigation. Therefore, each component has to embed ``close'' to its distinguished node(s). \tina{which allows the use of polynomial-sized canonical models.}
	 Due to the restricted language of $Q_2 \in \crpq(A)$ in case (1), 
	 	 the components are mapped to nodes which are reachable 
	  by a path of length $\leq d$ from the distinguished nodes, where $d$ is some polynomial in the size of $Q_2$. 
	    In case (2), the argumentation is more complex, as the
	    query $Q_2 \in \crpq(a,a^*)$ can have arbitrarily long paths due to the $a^*$, but we can compress the paths we need to consider by (i) limiting the length of paths using the same symbol and (ii) limiting the number of symbol changes. 
	    Thus we only need to consider paths of polynomial length around distinguished nodes.
	    \tina{maybe add:}
	    Limiting the length of each $a$-path is motivated by the standard argument that if we want to test $Q_1 \subseteq Q_2$ for the fragment $(a,a^*)$, we only need to replace each transitive edge in $Q_1$ by at most $|Q_2|+1$ many normal edges. Restricting the number of symbol changes is immediate from the fragment $(a,a^*)$: As each edge can only overcome one sort of symbol, each change requires a new edge, thus the number of symbol changes is limited to $|Q_2|$.
\end{proof}

Finally we show that, as long as the right query only has single symbols under
Kleene closures, query containment remains \pspace-complete.
\begin{theorem}\label{theo:crpq-in-A-astar}
	\querycon of \crpq in $\crpq(A,a^*)$ is \pspace-complete.
\end{theorem}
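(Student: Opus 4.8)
This is immediate: every $\crpq(A)$ query is also a $\crpq(A,a^*)$ query, so every instance of ``$\querycon$ of $\crpq$ in $\crpq(A)$'' is, verbatim, an instance of ``$\querycon$ of $\crpq$ in $\crpq(A,a^*)$''. Hence \pspace-hardness follows at once from Theorem~\ref{theo:crpq-in-A} (equivalently, from Theorem~\ref{theo:crpq-in-a-astar}, using that $\crpq(a,a^*)\subseteq\crpq(A,a^*)$).

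\textbf{Upper bound.} The plan is to show that non-containment is in \pspace. I would start from the canonical-model characterisation: $Q_1\not\subseteq Q_2$ iff there is a canonical model $K$ of $Q_1$ with associated mapping $\nu$ of the variables of $Q_1$ such that $(\nu(x_1),\dots,\nu(x_n))\notin\ans(Q_2,K)$. Up to the gluing pattern of $Q_1$'s variable graph (which is fixed), such a $K$ is just a choice of a word $w_A\in L(R)$ for every atom $A=y\,R\,z$ of $Q_1$, labelling the simple path realising $A$; the only shared nodes are the at most $|Q_1|$ ``junctions'' $\nu(\mathrm{var}(Q_1))$, and the distinguished variables land on some of them. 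Such a $K$ can be of exponential size and, in contrast to the \pitwo cases, need not admit any polynomial-size compressed description, so the algorithm cannot store $K$; it will instead explore $K$ ``on the fly'' in polynomial space (much as the classical \pspace algorithm for corridor tiling processes a tiling row by row).

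The reason polynomial space should suffice is that a homomorphism $h:Q_2\to K$ inspects $K$ only through (i) single edges, for the $A$-atoms, and (ii) $a$-reachability, for the $a^*$-atoms; and in a glued union of labelled simple paths, $a$-reachability between two positions on one path just means that the infix between them consists only of $a$'s, closed transitively through the at most $|Q_1|$ junctions. So the interface between the already-built part of $K$ and the rest is small: for each $Q_2$-variable already placed, which junction it sits on (or the fact that it is safely interior to an already-finished path), plus the $a$-reachability relation among the junctions realised so far. Concretely I would (a) guess $\nu$ on the distinguished variables; (b) nondeterministically build $K$ path by path, symbol by symbol, maintaining one NFA state per atom of $Q_1$ (to guarantee $w_A\in L(R)$), the junction-to-junction $a$-reachability built so far, and a polynomial-size ``type'' recording which partial $Q_2$-embeddings remain viable relative to this interface; and (c) on completion, accept iff no viable partial embedding is total and $\nu$-consistent. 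Here the restriction to $a^*$ rather than $A^*$ is essential: since $a^*$-atoms traverse only runs of a single letter, the exact length of a long run is irrelevant once it exceeds $|Q_2|+1$, which is what keeps the ``type'' polynomial; with $A^*$-atoms the interface involves richer reachability data and the complexity indeed jumps to \expspace (cf.\ Theorem~\ref{theo:a-Astar}). Everything being polynomial, the procedure runs in \pspace, matching the lower bound.

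The hard part will be the design and correctness proof of the polynomial-size summary in step~(b): a single partial $Q_2$-embedding fits in polynomial space, but there may be exponentially many viable ones, so one cannot simply track the set of them. I would need to prove that, because $Q_2$'s atoms are local and canonical models of $Q_1$ are unions of simple paths glued at only $O(|Q_1|)$ nodes, it is enough to remember for each $Q_2$-variable a bounded ``position type'' (which junction, or interior-to-a-finished-path) together with the bounded junction $a$-reachability data, and that from this polynomial certificate one can correctly decide non-embeddability of $Q_2$ into the final $K$. Making this precise — in particular handling $a^*$-atoms whose two endpoints are routed through several junctions and long single-letter runs across several paths — is the technical heart of the argument.
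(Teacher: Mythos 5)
Your lower bound is exactly the paper's (immediate from Theorem~\ref{theo:crpq-in-A}), and your structural observations for the upper bound --- that a component of $Q_2$ can only ``see'' boundedly many label changes, and that single-letter runs can be capped at roughly $|Q_1|+|Q_2|$ by pumping --- are precisely the two facts the paper's algorithm rests on. But there is a genuine gap where you yourself locate ``the technical heart'': step~(b) asks for a polynomial-size type summarizing \emph{which partial $Q_2$-embeddings remain viable} as the exponential-size model is streamed. You correctly note that the set of viable partial embeddings can be exponential, but you never exhibit the summary or argue it exists, and in the form you state it (a global summary of all partial embeddings of all of $Q_2$ into the evolving graph, including cross-junction bookkeeping) it is not clear it can be made polynomial. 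As written, the proposal reduces the theorem to an unproved claim.

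The paper avoids this difficulty by not attempting a single on-the-fly simulation of the whole graph. It guesses, for each atom $A = x\,R\,y$ of $Q_1$, only a prefix $u_A$ and suffix $v_A$ of length at most $|Q_2|\cdot(|Q_1|+|Q_2|)$ (enough to cover $|Q_2|$ label changes from each endpoint) with $u_A\Sigma^*v_A\cap L(R)\neq\emptyset$, and a single connected component $Q_2'$ of $Q_2$; it then splits the verification into two independent checks. (1) $Q_2'$ does not embed into the polynomial-size compressed model $Q_1'$ obtained by replacing each atom's path with $u_A\cdot s_\#\cdot v_A$, where the fresh symbol $\#$ blocks any atom of $Q_2$ from crossing the elided middle --- a \conp test on a polynomial object. (2) For each atom whose guessed ends exhibit the maximal number of label changes, there exists $w\in u_A\Sigma^*v_A\cap L(R)$ such that $Q_2'$ does not embed into $w$ --- and here the on-the-fly simulation is over a \emph{single word}, where the bounded-label-change property means only a sliding window of polynomial length ever needs to be kept, so the check is in \pspace. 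The decomposition is justified because a connected component either lands within $|Q_2|$ label changes of some atom endpoint (caught by test~1) or is buried in the interior of one atom's path (caught by test~2). If you want to salvage your approach, this case split by connected component and by where its image can land is the missing idea; without it, you would have to actually construct the global summary type and prove it correct, which is substantially harder than what the theorem requires.
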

\begin{proof}
	The lower bound is immediate from Theorem~\ref{theo:crpq-in-A}. For the upper
  bound we provide a \pspace-algorithm for non-containment. Let $Q_1 \in \crpq$,
  $Q_2 \in \crpq(A,a^*)$, and $\#$ be a symbol not appearing in $Q_1$ and $Q_2$.
  We first note that each component of $Q_2$ can express at most $|Q_2|$ many
  label changes on a path. Hence it suffices if the algorithm stores just the
  part of a path that corresponds to the last $|Q_2|$ label changes.
  Furthermore, a standard pumping argument yields that, in a counterexample, the
  length of segments that only use a single label can be limited to $|Q_1|+|Q_2|$.

  Therefore, for each atom of $A = xRy$ of $Q_1$, the \pspace-algorithm guesses
  words $u_A, v_A$ of length at most $|Q_2|\times(|Q_1|+|Q_2|)$, such that $u_A
  \Sigma^* v_A \cap L(R) \neq \emptyset$ and, if $u_A$ or $ v_A$ has less than
  $|Q_2|$ many label changes, then $u_av_a \in L$. We guess a component of
  $Q'_2$ and check that
  \begin{enumerate}[(1)]
	\item $Q'_2$ cannot be embedded in $Q'_1$, where $Q'_1$ is the KB resulting
    from replacing each atom $A= x R y$ with the path $u_A \cdot s_\# \cdot
    v_A$, where $s_\# = \varepsilon$ if $u_A$ or $v_A$ contains less than
    $|Q_2|$ label changes and $s_\# = \#$ otherwise; and
	\item for every atom $A = x R y$ of $Q_1$ such that $u_a$ and $v_a$ have $Q_2$
    many label changes there is $w \in u_A \Sigma^* v_A \cap L(R)$ such that
    $Q'_2$ cannot be embedded in $w$.
  \end{enumerate}
  If tests (1) and (2) succeed, we found a knowledge base into which $Q_1$ can
  be embedded, but $Q_2$ cannot. Test (1) is in \conp as $Q'_1$ has size
  polynomial in $Q_1$ and $Q_2$. Test (2) is in polynomial space, as the
  restricted language of $Q_2$ allows us to guess and verify the existence of
  $w$ on the fly while only keeping the path corresponding to the last $|Q_2|$
  label changes in memory with length at most $|Q_2|\times(|Q_1|+|Q_2|)$.
\end{proof}

\makeatletter{}\section{Transitive Closures of Sets}

In this section we show that adding just a little more expressiveness makes
containment \expspace-complete. This high complexity may be surprising,
considering that it already holds for $\crpq(a,A^*)$ queries, which is a fragment that
merely extends ordinary conjunctive queries \emph{by adding transitive reflexive
  closures of simple disjunctions}. Our proof is inspired on the hardness proof in
\citep{CalvaneseGLV-kr00} for general CRPQs, but we need to add a number of
non-trivial new ideas to make it work for $\crpq(a,A^*)$.
\paragraph{Disjunction creation.}
\label{para:dis-creat}
A significant restriction that is imposed on $\crpq(a,A^*)$ is that the non-transitive atoms are not
allowed to have disjunctions in their expressions. We get around this
by the following idea that generates disjunctive \emph{bad patterns} out of
conjunctions --- we use a similar idea in our next proof.

Consider the following query $Q_2$ where $\ell$ is a special helper symbol, 
$y_1 ~ \ell^* \cdot s_1 \cdot \ell \cdot s_2 \cdot \ell^*y_2$.
For query $Q_1$ given by 
 $\bigwedge_{\sigma \in \Sigma{\setminus}\{\ell\}}x_1 \sigma x_1 \wedge  x_1 \ell ~(\Sigma\setminus\{\ell\})^* ~\ell x_2
 \wedge \bigwedge_{\sigma \in \Sigma\setminus\{\ell\}}x_2 \sigma x_2$
 it is clear that $Q_1$ allows for exactly two $\ell$, and hence, if $Q_1$ would
 be contained in $Q_2$, one of the patterns $s_1$ or $s_2$ has to be be matched
 to the $(\Sigma\setminus\{\ell\})^*$ fragment in the middle. Essentially, we
 capture all bad patterns matching either $s_1$ or $s_2$, thereby ``creating''
 the result of a disjunction.

\begin{theoremrep}\label{theo:a-Astar}
	$\querycon$ of $\crpq(a,A^*)$ in $\crpq(a,A^*)$ is \expspace-hard, even if the
  size of the alphabet is fixed. 
\end{theoremrep}
\begin{proofsketch}
We reduce from the exponential width corridor tiling problem. That is, we have
	\begin{itemize}
		\item a finite set $T=\{t_1, \ldots, t_m\}$ of tiles,
		\item initial and final tiles $t_I, t_F \in T$, respectively,
		\item horizontal and vertical constraints $H,V \subseteq T \times T$,
		\item a number $n\in \nat$ (in unary),
	\end{itemize}
  and we want to check if there is a $k \in \nat$ and a tiling function $\tau\colon
  \{1,\ldots,k\}\times \{1,\ldots, 2^n\}\to T$ such that $\tau(1,1) = t_I$,
  $\tau(k,2^n)=t_F$, and all horizontal and vertical constraints are satisfied.
  In order to have a fixed alphabet, we encode tiles from $T$ as words from
  $\{\tileN, \tileY\}^m$. The $i$-th tile $t_i$ is encoded as $\hat t_i=
  \tileN^{i-1}\tileY \tileN^{m-i-1} \in \{\tileN, \tileY\}^m$.
  
  A tiling $\tau$ is encoded as a string over the alphabet
  $\mathbb{B}=\{\$,0,1,\tileN,\tileY,\#\}$, where $\$$ is the row separator, $0$
  and $1$ are used to encode addresses for each row of the tiling from $0$ to
  $2^n-1$ as binary numbers, $\#$ separates the individual bits of an address,
  and $\tileN$ and $\tileY$ are used to encode the individual tiles. We
  visualize a tiling as a matrix with $k$ rows of $2^n$ tiles each. An example
  of a tiling $\tau$ with $n=3$ is below:
  \begin{align*}
  		&\widehat{\tau(k,1)} 0\#0\#0 \widehat{\tau(k,2)} 0\#0\#1 &\cdots& &\widehat{\tau(k,2^3)} 1\#1\#1& &\$&\\
  		& \qquad\qquad\qquad\vdots&\reflectbox{$\ddots$}&  &\vdots\qquad& &\vdots&\\
		\$\,&\widehat{\tau(1,1)} 0\#0\#0 \widehat{\tau(1,2)} 0\#0\#1 &\cdots& &\widehat{\tau(1,2^3)} 1\#1\#1&& \$&
  \end{align*}

  The queries $Q_1$ and $Q_2$ use the alphabet $\mathbb{A}=\mathbb{B} \cup
  \{[,],\langle ,\rangle,b,\star\}$. This new set contains helper symbols
            $[$ and $]$ which we use for disjunction creation (in a
  similar way as we explained before the Theorem statement), and $\langle$ and
  $\rangle$ denote the start and end of the tiling. The $b$-symbol is used for a
  special edge that we use for checking vertical errors.
              Query $Q_1$ is given in
  Figure~\ref{fig:lower-A-Astar:q1} and query $Q_2$ is sketched in
  Figure~\ref{fig:lower-A-Astar:q2}. For convenience we use
  $\mathbb{B}_{\langle\rangle}$ to abbreviate $\mathbb{B} \cup \{\langle ,
  \rangle\}$, $\mathbb{B}_{[]\langle\rangle}$ to abbreviate
  $\mathbb{B}_{\langle\rangle} \cup \{[,]\}$, and $\mathbb{B}_{\overline \$}$ to
  abbreviate $\mathbb{B} \setminus \{\$\}$.
  
  The intuition is that the tiling is encoded in the $\mathbb{B}^*$-edge of
  $Q_1$, i.e. the only edge that is labeled by a language that is not a single
  symbol. The query $Q_2$ consists of a sequence  of \emph{bad patterns}, one for
  each possible kind of violation of the described encoding or the horizontal
  and vertical constraints. The queries are designed in such a way that $Q_2$
  cannot be embedded if a valid tiling is encoded in a canonical model of $Q_1$.
  Otherwise, at least one of the bad patterns can be embedded in the encoding of
  the tiling. The other bad patterns can be embedded at the nodes $z_2$ and $z_7$
  of $Q_1$, as these nodes have one self loop for every symbol of the alphabet
  except $\star$.

  We can easily design (sets of) patterns, where each pattern is a simple path, 
  to catch the following errors: malformed encoding of a tile, malformed
  encoding of an address, non-incrementing addresses, missing initial or final
  $\$$, wrong initial or final tile, and an error in the horizontal constraints.

  The most difficult condition to test is an error in the vertical constraints, which
  we encode with the pattern $G^{t,t'}$ for every $(t,t') \notin V$, given by
  \[\bigwedge_{1\leq i\leq n} G_i^{t,t'} \wedge \bigwedge_{\substack{i,j \in \{1,\dots,n\} \\ 
        c,d \in \{0,1\}; |i-j|=1}} (x_{i,c}^{t,t'}Lx_{j,d}^{t,t'} \wedge
    y_{i,c}^{t,t'}Ly_{j,d}^{t,t'})\;,\] where $G_i^{t,t'}$ is given in
  Figure~\ref{fig:lower-A-Astar:gi} and $L=b^* \mathbb{B}_{\langle\rangle}^*
  b^*$. We first explain the intuition behind $G_i^{t,t'}$.
  We assume that the vertical error occurs at tile $t$ having $0$ as $i$-th bit
  of its address. In that case, the variable $x_{i,0}^{t,t'}$ should be embedded
  just before the encoding of $t$, while $y_{i,0}^{t,t'}$ should be embedded in
  the next row just after the tile $t'$ with the same $i$-th bit. This is
  enforced as there is one $\$$ between $x_{i,0}^{t,t'}$ and $y_{i,0}^{t,t'}$,
  ensuring that both variables occur in consecutive rows.
  The variables $x_{i,1}^{t,t'}$ and $y_{i,1}^{t,t'}$ are simply
  embedded at the node corresponding to $z_6$ of $Q_1$.

  In the case that the $i$-th bit is 1, we embed $x_{i,0}^{t,t'}$ and
  $y_{i,0}^{t,t'}$ at $z_3$, while $x_{i,1}^{t,t'}$ and $y_{i,1}^{t,t'}$ are
  embedded at the tiles violating the vertical constraint, as described in the
  previous case.

	\begin{figure}[t]
		\centering
		\begin{tikzpicture}[auto, >=latex]
		\def\stretch{1.15};
		\def\ystretch{1};
		
		\node [] (v0) at (\stretch*-1,\ystretch) {};
		\node [] (v1) at (\stretch*0,\ystretch) {};
		\node [] (v2) at (\stretch*1,\ystretch) {};
		\node [] (v3) at (\stretch*2,\ystretch) {};
		\node [] (v4) at (\stretch*3,\ystretch) {};
		\node [] (v5) at (\stretch*4,\ystretch) {};
		\node [] (v6) at (\stretch*5,\ystretch) {};
		\node [] (v7) at (\stretch*6,\ystretch) {};

    \node[yshift=2.5mm] at (v0) {$z_1$};
    \node[yshift=2.5mm] at (v1) {$z_2$};
    \node[yshift=2.5mm] at (v2) {$z_3$};
    \node[yshift=2.5mm] at (v3) {$z_4$};
    \node[yshift=2.5mm] at (v4) {$z_5$};
    \node[yshift=2.5mm] at (v5) {$z_6$};
    \node[yshift=2.5mm] at (v6) {$z_7$};
    \node[yshift=2.5mm] at (v7) {$z_8$};
		
		\foreach \i in {0,1,2,3,4,5,6,7}{
			\fill (v\i) circle (2pt);
		}
		\path[->]
		(v0) edge	[] node[below]  {$\star$} (v1)
		(v1) edge	[loop below] node  {$\mathbb{B}_{[]\langle\rangle}$} (v1)
		(v1) edge	[loop below, min distance=6.5mm, out = -60, in = -120,looseness=10] node  {} (v1)
		(v1) edge	[] node[below]  {$[$} (v2)
		(v2) edge	[loop below]  node  {$\mathbb{B}_{\langle\rangle}$} (v2)
		(v2) edge	[loop below, min distance=6.5mm, out = -60, in = -120,looseness=10]  node  {} (v2)
		(v2) edge	[] node[below]  {$\langle $} (v3)
		(v3) edge	[] node[below]  {$\mathbb{B}^*$} (v4)
		(v4) edge	[] node[below]  {$\rangle$} (v5)
		(v5) edge	[loop below] node  {$\mathbb{B}_{\langle\rangle}$} (v5)
		(v5) edge	[loop below, min distance=6.5mm, out = -60, in = -120,looseness=10] node  {} (v5)
		(v5) edge	[] node[below]  {$]$} (v6)
		(v6) edge	[loop below]  node  {$\mathbb{B}_{[]\langle\rangle}$} (v6)
		(v6) edge	[loop below, min distance=6.5mm, out = -60, in = -120,looseness=10]  node  {} (v6)
		(v6) edge	[] node[below]  {$\star$} (v7)
		
		(v5) edge	[bend right = 30,swap] node  {$b$} (v2)
		;
		\end{tikzpicture}
		\caption{Query $Q_1$ in the proof of Theorem~\ref{theo:a-Astar}.
      Double-self-loops indicate a distinct self-loop for every single symbol,
      i.e., not a self-loop labeled with the alphabet. }
		\label{fig:lower-A-Astar:q1}
	\end{figure}
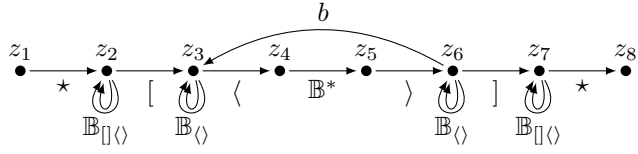
		\begin{figure}[t]
		\centering
		\begin{tikzpicture}[auto, >=latex]
		\def\stretch{.8};
		\def\ystretch{.8};
		
		\node [] (v0) at (\stretch*-.7,\ystretch) {};
		\node [] (v1) at (\stretch*0,\ystretch) {};
		\node [] (v2) at (\stretch*1,\ystretch) {};
		\node [] (v3) at (\stretch*2,\ystretch) {};
		\node [] (v4) at (\stretch*3,\ystretch) {};
		\node [] (v5) at (\stretch*4,\ystretch) {};
		\node [] (v6) at (\stretch*5,\ystretch) {};
		\node [] (v7) at (\stretch*6,\ystretch) {};
		\node [] (v8) at (\stretch*7,\ystretch) {};
		\node [] (v9) at (\stretch*7.7,\ystretch) {};
		
		\foreach \i in {0,1,2,3,4,5,6,7,8,9}{
			\fill (v\i) circle (2pt);
		}
		\path[->]
		(v0) edge	[] node  {$\star$} (v1)
		(v1) edge	[] node  {$[$} (v2)
				(v3) edge	[] node  {$]$} (v4)
				(v5) edge	[] node  {$[$} (v6)
				(v7) edge	[] node  {$]$} (v8)
		(v8) edge	[] node  {$\star$} (v9)
		;
		
		\node [] (dots) at (\stretch*3.5,\ystretch) {$\cdots$};
		\node [] (b1) at (\stretch*1.5,\ystretch) {$B_1$};
		\node [] (b2) at (\stretch*5.5,\ystretch) {$B_\ell$};
				\draw
		(\stretch*1,\ystretch*.5)--(\stretch*2,\ystretch*.5)--(\stretch*2,\ystretch*1.5)--(\stretch*1,\ystretch*1.5)-- (\stretch*1,\ystretch*.5)
		;
		\draw
		(\stretch*5,\ystretch*.5)--(\stretch*6,\ystretch*.5)--(\stretch*6,\ystretch*1.5)--(\stretch*5,\ystretch*1.5)-- (\stretch*5,\ystretch*.5)
		;
		\end{tikzpicture}
		\caption{Query $Q_2$ in the proof of Theorem~\ref{theo:a-Astar}. The $B_i$ denote ``bad patterns'' described in the proof;  each $B_i$ has a `left' and `right' distinguished variable as in the picture.}
		\label{fig:lower-A-Astar:q2}
	\end{figure}
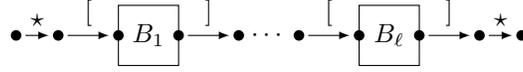

  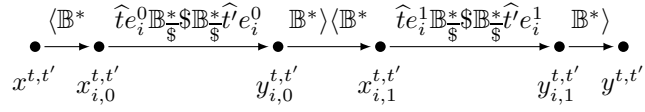
\begin{figure}[ht]
		\centering
		\begin{tikzpicture}[auto, >=latex]
		\def\stretch{3.4};
		\def\ystretch{1};
		
		\node [label=below:{$x^{t,t'}$}] (v1) at (\stretch*0.65,\ystretch) {};
		\node [label=below:{$x_{i,0}^{t,t'}$}] (v2) at (\stretch*.9,\ystretch) {};
		\node [label=below:{$y_{i,0}^{t,t'}$}] (v3) at (\stretch*1.6,\ystretch) {};
		\node [label=below:{$x_{i,1}^{t,t'}$}] (v4) at (\stretch*2,\ystretch) {};
		\node [label=below:{$y_{i,1}^{t,t'}$}] (v5) at (\stretch*2.7,\ystretch) {};
		\node [label=below:{$y^{t,t'}$}] (v6) at (\stretch*2.95,\ystretch) {};
		
		\foreach \i in {1,2,3,4,5,6}{
			\fill (v\i) circle (2pt);
		}
		\path[->]
		(v1) edge	[] node [yshift=1pt] {$	\langle \mathbb{B}^*$} (v2)
		(v2) edge	[] node [yshift=-1.9pt] {$\widehat{t} e_i^0 \mathbb{B}_{\overline \$}^*\$\mathbb{B}_{\overline \$}^* \widehat{t'} e_i^0$} (v3)
		(v3) edge	[] node [yshift=1pt] {$\mathbb{B}^*\rangle\langle \mathbb{B}^*$} (v4)
		(v4) edge	[] node [yshift=-1.9pt] {$\widehat{t} e_i^1 \mathbb{B}_{\overline \$}^*\$\mathbb{B}_{\overline \$}^* \widehat{t'} e_i^1$} (v5)
		(v5) edge	[] node [yshift=1pt] {$\mathbb{B}^*\rangle$} (v6)
		;

		\end{tikzpicture}
		\caption{Subquery $G_i^{t,t'}$ in the proof of
      Theorem~\ref{theo:a-Astar}. Here, $e_i^a=(\{0,1\}^*\#)^{i-1} a (\#
      \{0,1\}^*)^{n-i-1}$ is the language enforcing the $i$-th bit to be $a$.}
		\label{fig:lower-A-Astar:gi}
	\end{figure}

  Altogether, $G_i^{t,t'}$ verifies that there are positions $v$ and $w$ in
  consecutive rows of the encoding such that the tiles adjacent to $v$ and $w$
  would violate the vertical constraints and the positions agree on the $i$-th
  bit of the address. To ensure that the positions $v$ and $w$ agree on
  \textit{all} $n$ bits of the address we have to ensure that the $n$ patterns
  $G_1^{t,t'}, \dots, G_n^{t,t'}$ all refer to the \textit{same} two positions
  in the tiling. This is why we have the additional conjuncts with language $L
  =b^* \mathbb{B}_{\langle\rangle}^* b^*$ in $G^{t,t'}$. The language $L$ is
  chosen to ensure that there exists exactly one node $v$ in the tiling such
  that all the variables $x_{1,j}^{t,t'}, \dots, x_{n,j}^{t,t'}$, for $j \in
  \{0,1\}$ are either embedded at $v$, at the node corresponding to $z_3$ from
  $Q_1$, or at the node corresponding to $z_6$ from $Q_1$. If there were two
  variables $x_{i,c}^{t,t'}$ and $x_{j,d}^{t,t'}$ embedded at different
  positions between $z_3$ and $z_6$ then there is a $k$ and
  $\tilde{c},\tilde{d}$ such that $x_{k,\tilde{c}}^{t,t'}$ and
  $x_{k+1,\tilde{d}}^{t,t'}$ are embedded at different positions and thus at
  least one of the conjuncts $x_{k,\tilde{c}}^{t,t'}Lx_{k+1,\tilde{d}}^{t,t'}$
  and $x_{k+1,\tilde{d}}^{t,t'}Lx_{k,\tilde{c}}^{t,t'}$ has to be violated, as
  the symbol $b$ can be read only at the beginning or end of a string in $L$
  (recall that $b \notin \mathbb{B}_{\langle\rangle}$). The argument for the
  $y$-variables and the position $w$ is analogous.

	To conclude, whenever there exists a valid tiling, we have a canonical
  knowledge base with the encoding of a tiling occurring between $z_4$ and $z_5$. To embed $Q_2$ 
  into this, we need to span the full length flanked by the $\star$'s in the start and the end. 
  Thanks to (i) the symbols $[, ]$ flanking the bad patterns $B_i$ in $Q_2$, and (ii)
  the presence of these symbols only at edges from nodes $z_2, z_7$ in $Q_1$, 
    at least one of the bad patterns must embed into the 
  part between $z_3$ and $z_6$.  
  If there is no error,  we cannot embed $Q_2$, and hence no $B_i$ can be
  mapped between $z_3$ and $z_6$ and we have $Q_1 \not\subseteq Q_2$. On the
  other hand, when there is no valid tiling, for each canonical knowledge base with a
  `guessed' tiling, $Q_2$ maps one of the $B_i$ between $z_3$ and $z_6$, and can hence 
  embed completely from $\star$ to $\star$,  giving $Q_1 \subseteq Q_2$. 
\end{proofsketch}  
\begin{proof}
	We reduce from the exponential width corridor tiling problem. That is, we have
	\begin{itemize}
		\item a finite set $T=\{t_1, \ldots, t_m\}$ of tiles
		\item some initial and final tiles $t_I, t_F \in T$
		\item horizontal and vertical constraints $H,V \subseteq T \times T$
		\item a number $n\in \nat$ (in unary)
	\end{itemize}
   and we want to know if there is $k \in \nat$ and a tiling function $\tau\colon \{1,\ldots,k\}\times \{1,\ldots, 2^n\}\to T$ so that $\tau(1,1) = t_I$, $\tau(k,2^n)=t_F$, and,  all horizontal and vertical constraints are satisfied. 
   In order to have a fixed alphabet, we encode tiles from $T$ as words from $\{\tileN, \tileY\}^m$.    The tile $t_i$ is encoded as $\hat t_i= \underbrace{\tileN \dotsb \tileN}_{i-1} \tileY \underbrace{\tileN \dotsb \tileN}_{m-i-1} \in \{\tileN, \tileY\}^m$. (Remember, $m = |T|$.)
   
   A tiling $\tau\colon \{1,\ldots,k\}\times \{1,\ldots, 2^n\}\to T$ will be encoded as a string from $\mathbb{B}^*$ for $\mathbb{B}=\{\$,0,1,\tileN,\tileY,\#\}$. We visualize a tiling as a matrix of some $k \in \mathbb{N}$ number of rows each with $2^n$ tiles. Tiles in each row are addressed using an $n$-bit address, the bits being separated by a $\#$ symbol. For $n=3$, a possible encoding is as follows.\\
	\begin{align*}
		\$\,&\widehat{\tau(1,1)} 0\#0\#0 \widehat{\tau(1,2)} 0\#0\#1 &\cdots& &\widehat{\tau(1,2^3)} 1\#1\#1&& \$&\\
		& \qquad\qquad\qquad\vdots&\ddots&  &\vdots\qquad& &\vdots&\\
		&\widehat{\tau(k,1)} 0\#0\#0 \widehat{\tau(k,2)} 0\#0\#1 &\cdots& &\widehat{\tau(k,2^3)} 1\#1\#1& &\$&
	\end{align*}
	
	We construct the queries $Q_1, Q_2$ over the alphabet $\mathbb{A}=\mathbb{B} \cup \{[,],\langle ,\rangle,b,\star\}$. Intuitively, $Q_1$ will be used to guess a particular tiling (represented as a string on $\mathbb{B}$). $Q_2$ will be used to identify errors in the tiling. The symbols in $\{[,],\langle ,\rangle,b,\star\}$ are not part of the encoding but are used as helper symbols to capture bad patterns. Essentially then, $Q_1\subseteq Q_2$ whenever any tiling (canonical knowledge base) guessed by $Q_1$ contains an error and hence $Q_2$ can be embedded in it. On the other hand, $Q_1 \not\subseteq Q_2$ implies that there exists a tiling which is error free.
	
	Query $Q_1$ is described in Figure~\ref{fig:lower-A-Astar:q1} and $Q_2$ in Figure~\ref{fig:lower-A-Astar:q2}. 
	To complete the construction of $Q_2$ we  need to describe the `bad patterns', which we will do next. Every bad pattern described below, except the last one (which captures errors in vertical constraints),  is a directed path from the leftmost variable to the rightmost variable of the pattern. We now describe the labels of the paths, and we finally describe the last crucial bad pattern, capturing a mismatch of the vertical constraints.
	\begin{itemize}
		\item Bad encoding. We have a bad path with label $\langle \mathbb{B}^* w\mathbb{B}^* \rangle$ for every $w$ as described below.
			\begin{itemize}			
				\item The number of occurrences of $\tileY$ and the length of a tile encoding:
				\begin{itemize}				 
					\item more than one $\tileY$: $w = \tileY \tileN^i \tileY$ for every $i \leq m-1$.
					\item no $\tileY$: $w = \tileN^m$.
				\end{itemize}
				\item Bad encoding of addresses
				\begin{itemize}
					\item no alternation between $\{0,1\}$ and $\#$: $w= 00$, $01$, $10$,$11$, $\#\#$
					\item begins/ends with $\#$: $w = \#a$, $a\#$ for $a \in \mathbb{B}\setminus\{0,1\}$
					\item bad length of bit string
				\begin{itemize}
					\item too many: $w=(\#\{0,1\}^*)^n$
					\item too few: $w=a_1 (\{0,1\}^* \#)^{n'} \{0,1\}^* a_2$ for every $n' <  n-1$ and $a_1, a_2 \in \mathbb{B} \setminus \{0,1,\#\}$
				\end{itemize}
			\end{itemize}
			
		\end{itemize}
		\item The start or end of the encoding has a problem. We have a bad pattern for each of the following languages.
		\begin{itemize}
		\item Encoding does not start and end with $\$ $.
		\begin{itemize}
      		\item $\langle a\mathbb{B}^*\rangle$ for every $a \in \mathbb{B}\setminus\{\$\}$
      		\item $\langle \mathbb{B}^*a\rangle$ for every $a \in \mathbb{B}\setminus\{\$\}$
		\end{itemize}
      	\item Initial or final tiles are absent. 
      	\begin{itemize}
      		\item $\langle  \$ \cdot \hat{t} \cdot \mathbb{B}^* \rangle$ for every $t \in T \setminus \set{t_I}$
      		\item $\langle \mathbb{B}^* \cdot \hat{t}\cdot  \$\rangle$ for every $t \in T \setminus \set{t_F}$
      	\end{itemize}
		\end{itemize}

        \item There is  some row, for which the bit string representing the address  does not increment correctly from $0^n$ to $1^n$. We have a bad pattern with a path having label $\langle \mathbb{B}^* w \mathbb{B}^* \rangle$ for every expression $w$ as described below.
        \begin{itemize}
        	\item First address of a row is not $0^n$: $w =  \$ \{\tileN,\tileY\}^*\{0,\#\}^*1$
        	\item Last address of a row is not $1^n$: $w = 0 \{1,\#\}^*\$$
        	\item There are consecutive addresses in the same row so that the right-hand address is not the increment of the left-hand one: For every $i,j,k \leq n$, $a \in \{0,1\}$: 
        	\begin{itemize}
        	\item $i$-th bit does not change from $0$ to $1$ when it should: $w = 0(\#1)^{n-i} \cdot \{\tileN, \tileY\}^* \cdot (\{0,1\}^*\#)^{i-1} 0$ 
        	\item $j$-th bit does not change from $1$ to $0$ when it should ($j>i$): $w = 0(\#1)^{n-i} \cdot \{\tileN, \tileY\}^* \cdot (\{0,1\}^*\#)^{i} \set{0,1,\#}^*1$ 
        	\item $i$-th bit flips its value when it shouldn't: $w = (\{0,1\}^*\#)^{i-1} a\#(\{0,1\}^*\#)^j 0 (\#1)^k \cdot \{\tileN, \tileY\}^* \cdot (\{0,1\}^*\#)^{i-1} (1-a)$ 
        	        	        	\item after $1^n$ there is no $\$$: $w = 1(\#1)^{n-1} \cdot (\mathbb B \setminus \set{\$})$.             \end{itemize}
        \end{itemize}
        \item A pair of horizontally consecutive tiles does not satisfy the horizontal constraints: For every $(t,t') \in T^2 \setminus H$ we have a path bad pattern with language $\hat{t} \cdot \{0,1,\#\}^* \cdot \hat{t'}$.
        \item A pair of vertically consecutive tiles does not satisfy the vertical constraints. This is the most challenging condition to test, for which we need to use a more complex query than just a path query. For every $(t,t') \in T^2 \setminus V$ we construct the bad pattern \[\bigwedge_{1\leq i\leq n} G_i^{t,t'} \wedge \bigwedge_{1\leq i\leq n-1} P_i^{t,t'}\] where $G_i^{t,t'}$ and $P_i^{t,t'}$ are defined in Figures~\ref{fig:lower-A-Astar:gi} and \ref{fig:lower-A-Astar:pi} respectively. The left and right variables of the bad pattern are $x^{t,t'}$ and $y^{t,t'}$ respectively.
	\end{itemize}
	
	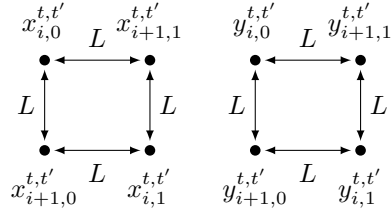
\begin{figure}[ht!]
		\centering
		\begin{tikzpicture}[auto, >=latex]
		\def\stretch{1.4};
		\def\ystretch{.3};
				\node [label=above:{$x_{i,0}^{t,t'}$}] (v1) at (\stretch*1,\ystretch) {};
		\node [label=below:{$x_{i,1}^{t,t'}$}] (v2) at (\stretch*2,\ystretch*-3) {};
		\node [label=below:{$x_{i+1,0}^{t,t'}$}] (v3) at (\stretch*1,\ystretch*-3) {};
		\node [label=above:{$x_{i+1,1}^{t,t'}$}] (v4) at (\stretch*2,\ystretch) {};

		\node [label=above:{$y_{i,0}^{t,t'}$}] (v5) at (\stretch*3,\ystretch) {};
		\node [label=below:{$y_{i,1}^{t,t'}$}] (v6) at (\stretch*4,\ystretch*-3) {};
		\node [label=below:{$y_{i+1,0}^{t,t'}$}] (v7) at (\stretch*3,\ystretch*-3) {};
		\node [label=above:{$y_{i+1,1}^{t,t'}$}] (v8) at (\stretch*4,\ystretch) {};

		\foreach \i in {1,2,3,4,5,6,7,8}{
			\fill (v\i) circle (2pt);
		}
		\path[<->]
				(v1) edge	[swap] node  {$L$} (v3)
		(v1) edge	[] node  [yshift=1.5pt]  {$L$} (v4)
		(v2) edge	[] node  [yshift=-1.5pt] {$L$} (v3)
		(v2) edge	[swap] node  {$L$} (v4)

				(v5) edge	[swap] node  {$L$} (v7)
		(v5) edge	[] node  [yshift=1.5pt]  {$L$} (v8)
		(v6) edge	[] node  [yshift=-1.5pt] {$L$} (v7)
		(v6) edge	[swap] node  {$L$} (v8)
				;

		\end{tikzpicture}
		\caption{Subquery $P_i^{t,t'}$ in the proof of Theorem~\ref{theo:a-Astar}. Observe that $P_i^{t,t'}$ and $P_i^{t,t'}$ 
		share variables. 		
				Here, $L = b^* (\mathbb{B}^* \cup \{\langle ,\rangle\})^* b^*$. Double-headed arrows such as $x \xleftrightarrow L y $ mean that there are edges $x \xrightarrow L y$ and $x \xleftarrow L y$.
				}
		\label{fig:lower-A-Astar:pi}
	\end{figure}
	
	We now show correctness. 
	If there is a correct tiling, its encoding can be constructed by $Q_1$, more precisely, between nodes $z_4$ and $z_5$ (called the middle-path or \textit{MP} in short) annotated by $\mathbb{B}^*$. We show that no ``bad pattern'' of $Q_2$ can match into it. Since it is formatted and encoded correctly, we can exclude those bad pattern. Since it is easy to see that the initial and final tile are correct and horizontal constraints must be satisfied, we will focus on vertical constraints only. Let us therefore assume towards contradiction that there is $(t,t') \notin V$ such that 
	$$\bigwedge_{1\leq i\leq n} G_i^{t,t'} \wedge \bigwedge_{1\leq i\leq n-1} P_i^{t,t'}$$
	can be mapped into the \textit{middle-path} of $Q_1$. 
	Let the $x$'s (short for $x^{t,t'}_{i,0}, x^{t,t'}_{i,1},x^{t,t'}_{i+1,0},x^{t,t'}_{i+1,1}$)  and $y$'s be mapped such that all $G_i^{t,t'}$ hold. One of the $x$ (and $y$) must be mapped on node $z_3$ (called the 2nd double-self-loop or \textit{2DSL} in short) or node $z_6$ (called 3rd double-self-loop or \textit{3DSL} in short) in $Q_1$, while the other is mapped between $z_4$ and $z_5$ (\textit{MP}). This is the case since no $\langle\rangle$ is permitted between $z_4$ and $z_5$ and hence consequently, we can have only one pair of $\langle\rangle$ on \textit{MP}. The second pair from $G_i^{t,t'}$ must hence be absorbed by one of \textit{2DSL} or \textit{3DSL}.
	We show that there is a $i$ such that $P_i^{t,t'}$ doesn't hold.
	Since the \textit{MP} of $Q_1$ encodes a correct tiling, there is no $j,k$ with $\tau(j,k)= t$, $\tau(j+1,k)=t'$. Therefore, not all $x$'s (excluding those on the \textit{DSL}s) are mapped onto the same node in the \textit{MP}, or not all $y$'s are. If the contrary were true it would mean that the addresses of $t$ and $t'$ match, $t$ and $t'$ would be vertically adjacent, and consequently the vertical relation would be violated.
		Then there is $z\in \{x,y\}$, $i \in \{1,\ldots, n-1\}$ and $a,c\in \{0,1\}$ such that $z_{i,a}^{t,t'}$ and $z_{i+1,c}^{t,t'}$ are mapped onto different nodes in the middle-path. Since the edges  in $P_i^{t,t'}$ are symmetric, we can assume w.l.o.g.\ that $z_{i,a}^{t,t'}$ is mapped to a node before $z_{i+1,c}^{t,t'}$ in $Q_1$. Then there is no $L$-path from $z_{i+1,c}^{t,t'}$ to $z_{i,a}^{t,t'}$, thus $P_i^{t,t'}$ doesn't hold.

	So let us assume that every tiling has an error (at least one). We will show that $Q_2$ has a ``bad pattern'' that can match into the middle part of $Q_1$. First of all, $Q_2$ has bad patterns accepting every string that is no correct encoding of a possible tiling (encoding of tiles or addresses are wrong, addresses are too short/long/don't increase correctly), so we only need to consider encodings of possible tilings to decide containment. 
	As all possible tilings have errors, we have to deal with: wrong initial or final tile, horizontal and vertical errors. The first few are easy to handle. We will how to handle vertical errors in more detail.
	Let us assume that the \textit{MP} of $Q_1$ represents a tiling with vertical error. We will show that there is a homomorphism mapping the pattern recognizing the vertical errors into this part. 
	Let $\tau(j,k) = t$ and $\tau(j+1,k) = t'$ with $(t,t')\notin V$ and $bin(k)= a_1 \cdots a_n$ be the binary representation of $k$. We can map $x^{t,t'}_{i,a_i}$ onto the node just before $\tau(j,k)$ and $y^{t,t'}_{i,a_i}$ onto the node just after $\tau(j+1,k) e_i^{a_i}$ for all $i$. The unused $x_{i,0}^{t,t'}$ and $y_{i,0}^{t,t'}$ can be mapped into the \textit{2DSL}, and the unused $x_{i,1}^{t,t'}$ and $y_{i,1}^{t,t'}$ into the \textit{3DSL} of $Q_1$.
	It remains to show that the $x$'s (short for $x^{t,t'}_{i,0}, x^{t,t'}_{i,1},x^{t,t'}_{i+1,0},x^{t,t'}_{i+1,1}$) are pairwise connected by $L$-paths, thus satisfy $P_i^{t,t'}$. The proof for the $y$'s is analogous. By our mapping, each $x$ is mapped on one of 3 nodes: the one right before $\tau(j,k)$, the \textit{2DSL} (node $z_3$) or the \textit{3DSL} (node $z_6$). (Similarly, each $y$ can be mapped on one of 3 nodes: the one just after $\tau(j+1,k)$, the \textit{2DSL} or the \textit{3DSL}). As all of these 3 nodes are pairwise connected by $L$-paths, and also the same node is connected to itself since $\varepsilon \in L$, the $x$ satisfy $P_i^{t,t'}$.
	
\paragraph{Bounding the treewidth of queries}	
	
We now observe the queries $Q_1$ and $Q_2$ considered above have bounded treewidth. This is trivially true for $Q_1$. We note that $Q_2$ is a linear composition of bad patterns or blocks $B_i$. If we can show that each such $B_i$ has a bounded treewidth, then we can also infer that $Q_2$ has a bounded treewidth. 
The blocks capturing bad encodings and horizontal constraints are composed of a single atom and hence clearly have bounded treewidth.
We thus focus on showing that the blocks $B_{t, t'}$ representing errors w.r.t. vertical constraints 
for each pair $t, t'$ of tiles have bounded treewidth.  We show that the treewidth for such a block $B^{t, t'}$ is in fact at most 9. Consider such a block  $B^{t,t'} = {\color{cyan} \bigwedge_{1\leq i\leq n} G_i^{t,t'}} \wedge {\color{red} \bigwedge_{1\leq i\leq n-1} P_i^{t,t'}}$. This block consists of the nodes $x^{t,t'}$, $y^{t,t'}$, $x_{i,0}^{t,t'}$ $x_{i,1}^{t,t'}$, $y_{i,0}^{t,t'}$ and $y_{i,1}^{t,t'}$ for $i\in [n]$. The structure of this block is as follows where the superscript $(t,t')$ is removed for clarity. Edges generated due to ${\color{cyan} \bigwedge_{1\leq i\leq n} G_i^{t,t'}}$ and ${\color{red} \bigwedge_{1\leq i\leq n-1} P_i^{t,t'}}$ are color coded accordingly. 
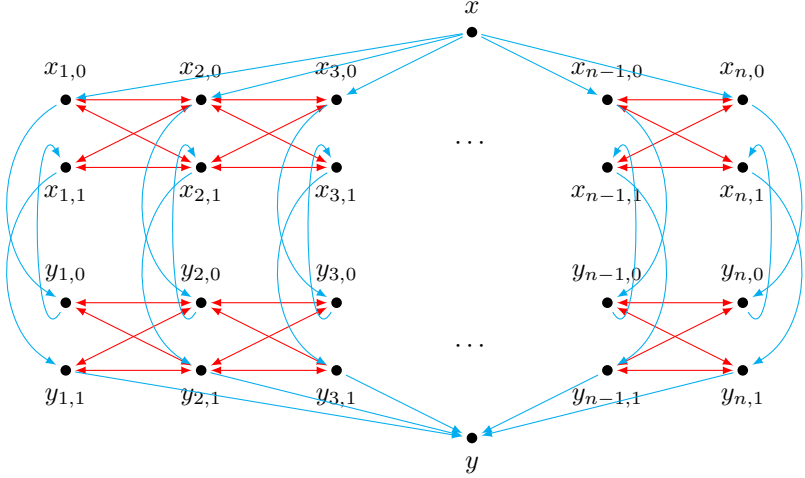
\begin{figure}[ht!]
		\centering
		\begin{tikzpicture}[auto, >=latex]
		\def\stretch{1.8};
		\def\ystretch{.3};
				\node [label=above:{$x$}] (c1) at (\stretch*1,\ystretch*0) {};
		\node [label=below:{$y$}] (c2) at (\stretch*1,\ystretch*-18) {};		
		
		\node [label=above:{$x_{1,0}$}] (v1) at (\stretch*-2,\ystretch*-3) {};
		\node [label=below:{$x_{1,1}$}] (v2) at (\stretch*-2,\ystretch*-6) {};
		\node [label=above:{$x_{2,0}$}] (v3) at (\stretch*-1,\ystretch*-3) {};
		\node [label=below:{$x_{2,1}$}] (v4) at (\stretch*-1,\ystretch*-6) {};		
		\node [label=above:{$x_{3,0}$}] (v5) at (\stretch*0,\ystretch*-3) {};
		\node [label=below:{$x_{3,1}$}] (v6) at (\stretch*0,\ystretch*-6) {};		

		\node [label={$\cdots$}] (c3) at (\stretch*1,\ystretch*-6) {};
		
		\node [label=above:{$x_{n-1,0}$}] (v7) at (\stretch*2,\ystretch*-3) {};
		\node [label=below:{$x_{n-1,1}$}] (v8) at (\stretch*2,\ystretch*-6) {};
		\node [label=above:{$x_{n,0}$}] (v9) at (\stretch*3,\ystretch*-3) {};
		\node [label=below:{$x_{n,1}$}] (v10) at (\stretch*3,\ystretch*-6) {};

		\node [label=above:{$y_{1,0}$}] (v11) at (\stretch*-2,\ystretch*-12) {};
		\node [label=below:{$y_{1,1}$}] (v12) at (\stretch*-2,\ystretch*-15) {};
		\node [label=above:{$y_{2,0}$}] (v13) at (\stretch*-1,\ystretch*-12) {};
		\node [label=below:{$y_{2,1}$}] (v14) at (\stretch*-1,\ystretch*-15) {};		
		\node [label=above:{$y_{3,0}$}] (v15) at (\stretch*0,\ystretch*-12) {};
		\node [label=below:{$y_{3,1}$}] (v16) at (\stretch*0,\ystretch*-15) {};		

		\node [label={$\cdots$}] (c5) at (\stretch*1,\ystretch*-15) {};
		
		\node [label=above:{$y_{n-1,0}$}] (v17) at (\stretch*2,\ystretch*-12) {};
		\node [label=below:{$y_{n-1,1}$}] (v18) at (\stretch*2,\ystretch*-15) {};
		\node [label=above:{$y_{n,0}$}] (v19) at (\stretch*3,\ystretch*-12) {};
		\node [label=below:{$y_{n,1}$}] (v20) at (\stretch*3,\ystretch*-15) {};

		\foreach \i in {1,2,3,4,5,6,7,8, 9, 10,11, 12,13,14,15,16,17,18,19,20}{
			\fill (v\i) circle (2pt);
		}
		\fill (c1) circle (2pt);
		\fill (c2) circle (2pt);
		
		\path[<->]
				(v1) edge	[swap,red] node  {} (v3)
		(v1) edge	[red] node  [yshift=1.5pt]  {} (v4)
		(v2) edge	[red] node  [yshift=-1.5pt] {} (v3)
		(v2) edge	[swap,red] node  {} (v4)

				(v3) edge	[swap,red] node  {} (v5)
		(v3) edge	[red] node  [yshift=1.5pt]  {} (v6)
		(v4) edge	[red] node  [yshift=-1.5pt] {} (v5)
		(v4) edge	[swap,red] node  {} (v6)
		
		(v7) edge	[swap,red] node  {} (v9)
		(v7) edge	[red] node  [yshift=1.5pt]  {} (v10)
		(v8) edge	[red] node  [yshift=-1.5pt] {} (v9)
		(v8) edge	[swap,red] node  {} (v10)
				
		(v11) edge	[swap,red] node  {} (v13)
		(v11) edge	[red] node  [yshift=1.5pt]  {} (v14)
		(v12) edge	[red] node  [yshift=-1.5pt] {} (v13)
		(v12) edge	[swap,red] node  {} (v14)

				(v13) edge	[swap,red] node  {} (v15)
		(v13) edge	[red] node  [yshift=1.5pt]  {} (v16)
		(v14) edge	[red] node  [yshift=-1.5pt] {} (v15)
		(v14) edge	[swap,red] node  {} (v16)
		
		(v17) edge	[swap,red] node  {} (v19)
		(v17) edge	[red] node  [yshift=1.5pt]  {} (v20)
		(v18) edge	[red] node  [yshift=-1.5pt] {} (v19)
		(v18) edge	[swap,red] node  {} (v20);

		\path[->]

		(v1) edge	[bend right=60,cyan] node  {} (v11)		
		(v11) edge	[bend left=150,cyan] node  {} (v2) 
		(v2) edge	[bend right=60,cyan] node  {} (v12)	
		(v3) edge	[bend right=60,cyan] node  {} (v13)		
		(v13) edge	[bend left=150,cyan] node  {} (v4) 
		(v4) edge	[bend right=60,cyan] node  {} (v14)
		(v5) edge	[bend right=60,cyan] node  {} (v15)		
		(v15) edge	[bend left=150,cyan] node  {} (v6) 
		(v6) edge	[bend right=60,cyan] node  {} (v16)
		(v7) edge	[bend left=60,cyan] node  {} (v17)		
		(v17) edge	[bend right=150,cyan] node  {} (v8) 
		(v8) edge	[bend left=60,cyan] node  {} (v18)
		(v9) edge	[bend left=60,cyan] node  {} (v19)		
		(v19) edge	[bend right=150,cyan] node  {} (v10) 
		(v10) edge	[bend left=60,cyan] node  {} (v20)
		
		(c1) edge	[cyan] node  {} (v1)
		(c1) edge	[cyan] node  {} (v3)
		(c1) edge	[cyan] node  {} (v5)
		(c1) edge	[cyan] node  {} (v7)
		(c1) edge	[cyan] node  {} (v9)		
		
		(v12) edge	[cyan] node  {} (c2)
		(v14) edge	[cyan] node  {} (c2)
		(v16) edge	[cyan] node  {} (c2)
		(v18) edge	[cyan] node  {} (c2)
		(v20) edge	[cyan] node  {} (c2);
		
		\end{tikzpicture}
		\caption{Graph pattern corresponding to the block $B^{t,t'}$ capturing violation of vertical constraints (all nodes have superscript $(t,t')$), which 
		have been omitted for readability.}
		\label{fig:block}
	\end{figure}

We note from the figure above that the graph is a grid, whose breadth (or the number of columns)  is dependent on $n$, and unbounded. 
However, it has bounded height of 6. It has a  tree decomposition which is a path of $n$ nodes : the $i^{th}$ node in the tree decomposition is the bag of 10 nodes $\{x^{t,t'}, y^{t,t'}, x^{t,t'}_{i,0}, x^{t,t'}_{i+1,0}, x^{t,t'}_{i,1}, x^{t,t'}_{i+1,1}, y^{t,t'}_{i,0}, 
y^{t,t'}_{i+1,0}, y^{t,t'}_{i,1}, y^{t,t'}_{i+1,1}\}$.   
 Hence, the  treewidth of each such block is 9 proving the result. \end{proof}

\begin{remark}
  We observe that the queries $Q_1$ and $Q_2$ in Theorem
  \ref{theo:a-Astar} have bounded \textit{treewidth}. Treewidth is
  a commonly used parameter in parameterized complexity analysis and
  intuitively, captures how close the graph is to a tree. A tree has treewidth
  1, while $K_n$, the complete graph on $n$ vertices has treewidth $n-1$. It
  is known that the containment problem of CQs with bounded treewidth (as is
  the evaluation problem of CQs with bounded treewidth) is
  in \ptime \citep{ChekuriR-tcs00}. In this light, it is surprising how the
  complexity of containment increases to \expspace already for $\crpq(a,A^*)$,
  \emph{even for queries of bounded treewidth}.
\end{remark}

\makeatletter{}\section{Deutsch and Tannen's W-Fragment}\label{sec:wfragment}
The complexity of containment of CRPQs with restricted regular expressions has
also been investigated by \citet{DeutschT-dbpl01}. Their work was motivated by
the types of restrictions imposed on navigational expressions in the query
language XPath. Interestingly, they left some questions open, such as the
complexity of containment for
CRPQs using expressions from their \emph{W-fragment}.\footnote{The nomenclature of this fragment
  is a mystery to us. Even Deutsch and Tannen say: ``The fragments called W and
  Z have technical importance but their definitions did not suggest anything
  better than choosing these arbitrary names.''} The
W-fragment is defined by the following grammar:
\begin{align*}
R \quad \rightarrow \quad& \sigma \;\mid\; \_ \;\mid\; S^* \;\mid\; R\cdot R \;\mid\; (R + R)\\
S \quad \rightarrow  \quad& \sigma \;\mid\; \_ \;\mid\; S \cdot S
\end{align*}
Here, $\sigma \in \Sigma$ and $\_$ is a wildcard, i.e., it matches a single,
arbitrary symbol from the infinite set $\Sigma$. In the RPQs underlying
Table~\ref{tab:percentages}, wildcards occurred in 0\% (40 out of 55M)
property paths in Wikidata queries, but in $\sim$4.30\% of the
property paths in valid and in 15.68\% of the property paths in unique
DBpedia$^\pm$ queries. By $\crpq(W)$, we denote CRPQs where the regular
expressions are from the W-fragment.

\citet{DeutschT-dbpl01} claimed that containment for $\crpq(W)$ is \pspace-hard,
but their proof, given in Appendix~C of their article, has a minor error: it
uses the assumption that $\Sigma$, the set of edge labels, is finite. In fact,
we show that 
containment of $\crpq(W)$ queries is in \pitwo. Furthermore, the right query can
even be relaxed completely. \begin{theorem}\label{theo:w-in-crpq}
  Containment of $\crpq(W)$ in \crpq is in \pitwo.
\end{theorem}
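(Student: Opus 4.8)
The plan is to give a $\sigmatwo$ algorithm for \emph{non}-containment; this puts containment in $\pitwo$. As usual, $Q_1 \not\subseteq Q_2$ holds iff some canonical model $K$ of $Q_1$ — say, $\nu$-canonical — satisfies $(\nu(x_1),\dots,\nu(x_n)) \notin \ans(Q_2,K)$. Such a $K$ is obtained by choosing, for each atom $y R z$ of $Q_1$, a word $w \in L(R)$ and realizing it as a fresh simple path. The point of the $W$-fragment is that this choice has very little structure: distributing $+$ over $\cdot$ writes $L(R)$ as a union, over \emph{disjuncts} $D$ obtained by resolving every $+$ of $R$ to one of its two arguments, of languages $L(D)$ with $D = \beta_0 S_1^* \beta_1 \cdots S_p^* \beta_p$, where each $\beta_j$ and each $S_i$ is a word over $\Sigma \cup \{\_\}$ of length at most $|R|$, and $p \le |R|$. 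A single such disjunct is polynomial-size and can be guessed in polynomial time. Hence a canonical model is fully described by: for each atom, a disjunct, a repetition count $e_i \ge 0$ for each star $S_i^*$, and an instantiation of every wildcard by a concrete symbol.

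The first reduction removes the wildcard instantiations. Let $\#$ be a symbol occurring in neither $Q_1$ nor $Q_2$ (possible since $\Sigma$ is infinite). I claim that $Q_1 \not\subseteq Q_2$ iff some canonical model in which every wildcard is instantiated by $\#$ — for some choice of disjuncts and repetition counts — witnesses non-containment. Fix the disjuncts and repetition counts; replacing an arbitrary wildcard instantiation by $\#$ can only turn usable edges into edges that no regular expression of $Q_2$ can ever traverse, so every satisfying homomorphism of $Q_2$ into the $\#$-instantiated model, using only non-$\#$ edges, is also one into the original model with the same variable assignment. Thus the $\#$-instantiation is the ``hardest'' model for each structural choice, and, being itself a bona fide canonical model of $Q_1$, it is a legitimate witness whenever one exists.

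The second, and main, step bounds the repetition counts: if some $\#$-instantiated canonical model (with arbitrarily large $e_i$) witnesses non-containment, then so does one in which every $e_i$ is at most $N$, for some $N$ polynomial in $|Q_1|+|Q_2|$. For a star $S_i^*$ whose pattern $S_i$ contains a wildcard, the $\#$-instantiated region $w_i^{e_i}$ (with $e_i \ge 1$) contains the symbol $\#$ and so cannot be entered by $Q_2$ at all, nor can any connected component of $Q_2$ be embedded into its interior; one checks that lowering such an $e_i$ to $\min(e_i,1)$ preserves the witness. For a wildcard-free $S_i$, the region $w_i^{e_i}$ is a concrete periodic path, and a pumping argument over the product of $K$ with the polynomially many, polynomial-size NFAs of $Q_2$ shows that, once $e_i$ exceeds a polynomial threshold, $Q_2$-embeddability no longer changes as $e_i$ grows. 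The delicate sub-case — exactly the one that forced the extra test in the proof of Theorem~\ref{theo:crpq-in-a} — is a connected component of $Q_2$ homomorphically equivalent to a path, which can be ``laid along'' a long periodic region; since the label of such a path has length at most $|Q_2|$, this phenomenon also stabilizes once $e_i$ is large enough. I expect this step to be the real obstacle; everything else is routine.

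Putting the pieces together, the $\sigmatwo$ procedure for non-containment guesses, for each atom of $Q_1$, a disjunct of its regular expression together with repetition counts $e_i \le N$; it builds the corresponding $\#$-instantiated canonical model $K$, which by the above has size polynomial in $|Q_1|+|Q_2|$; and it then verifies, in \conp, that there is \emph{no} satisfying homomorphism of $Q_2$ into $K$ mapping each distinguished variable of $Q_2$ to the node prescribed by $K$ (existence of such a homomorphism is in \np: guess the variable assignment and check each atom by reachability in the product of $K$ with the atom's NFA). Soundness holds because the constructed $K$ is a genuine canonical model of $Q_1$; completeness follows from the two claims above. Altogether this is a $\sigmatwo$ algorithm for non-containment, hence containment of $\crpq(W)$ in $\crpq$ is in $\pitwo$.
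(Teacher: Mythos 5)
Your overall architecture (a \sigmatwo procedure for non-containment that guesses a succinctly described canonical model of $Q_1$ and then refutes every embedding of $Q_2$) matches the paper's, as does the wildcard step: replacing every wildcard by one fresh symbol not occurring in $Q_2$ yields the ``hardest'' instantiation, exactly as in the paper. The decomposition of each W-expression into a disjunct $\beta_0 S_1^*\beta_1\cdots S_p^*\beta_p$ is also the same normal form the paper uses when it writes every path of the counterexample as $w_0^{\ell_0}a_1w_1^{\ell_1}\cdots a_nw_n^{\ell_n}$.

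The gap is your main step: the claim that the repetition counts $e_i$ can be bounded by a \emph{polynomial} $N$ is false, and the ``routine pumping argument'' you defer to does not exist. Take $Q_1$ to be the single atom $x\,a^*\,y$ with $x,y$ distinguished (this is in the W-fragment), and $Q_2$ the single atom $x\,R\,y$ with $R=\sum_{i=1}^m (a^{p_i})^*(\varepsilon + a + \cdots + a^{p_i-2})$ for the first $m$ primes, so that $a^\ell\in L(R)$ iff $\ell\not\equiv -1 \pmod{p_i}$ for some $i$. Then $|R|=O(m^2\log m)$, but the smallest $\ell$ witnessing $Q_1\not\subseteq Q_2$ is $\prod_i p_i - 1 \geq 2^m$, which is superpolynomial in $|Q_1|+|Q_2|$. (This is just non-universality of unary NFAs, whose shortest witnesses have length $e^{\Theta(\sqrt{s\log s})}$.) So with a polynomial cap $N$ your algorithm is incomplete, and with the correct cap the model is too large to build explicitly and hand to an \np embedding check. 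Your remark about components homomorphically equivalent to paths having labels of length $\leq |Q_2|$ conflates the length of the path in $Q_2$'s underlying graph with the lengths of words in its languages; the latter are unbounded. The paper's proof confronts exactly this: it proves only an \emph{exponential} bound $2^{|Q_2|^3}$ on the multiplicities (via pigeonhole on the powers $M^j$ of the Boolean transition matrix of $w$ over the union of $Q_2$'s NFA state sets, and a cut-and-reinsert argument showing minimal counterexamples cannot exceed this), then keeps the representation polynomial by writing each $w^i$ symbolically with $i$ in binary, and finally verifies a guessed homomorphism on this compressed model in polynomial time by computing $\delta^*(q,w^i)$ with fast squaring. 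Those two ingredients --- the compressed representation and the squaring-based verification --- are what your proposal is missing, and without them the argument does not go through.
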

\begin{proof}
	Let $Q_1 \in \crpq(W)$ and $Q_2\in \crpq$. We first show a small model
  property. More precisely, we show that whenever there is a counterexample
  to the containment, then there also exists a canonical model $B$ of $Q_1$
  such that $B \notin Q_2$ and $B$ can be represented by a polynomial size graph
  where each edge is either labeled with a single symbol or by $w^i$, where $w$
  is of size linear in $Q_1$ and $i$ is at most $2^{|Q_2|^3}$.
  
  Assume that $B$ is the smallest graph that is a canonical model of $Q_1$ and has no satisfying homomorphism from $Q_2$. \mbox{W.l.o.g.}, we assume that all occurrences of $\_$
  in $Q_1$ are replaced by the same symbol $\$$ that does not occur in $Q_2$. As
  the W-fragments allows only a fixed string below every star, every path of $B$
  can be written as $w_0^{\ell_0}a_1w_1^{\ell_1}a_2 \cdots a_nw_n^{\ell_n}$,
  where $n<|Q_1|$ and $\ell_i \in \nat$, as all long segments of a path have to
  result from applying the Kleene star to a fixed string.

  It remains to show that for every path, all multiplicities are at most
  $2^{|Q_2|^3}$. We assume towards a contradiction that there exists a path $p$
  in $B$, where for some string $w$, the multiplicity $\ell$ is larger than
  $2^{|Q_2|^3}$. We assume \mbox{w.l.o.g.} that all NFAs in $Q_2$ share the same
  transition function $\delta$ over the same set of states $P$, which can be
  achieved by taking the disjoint union of all sets of states. Let $M$ be the
  adjacency matrix of the transition relation for the string $w$, i.e., $M$ is a
  Boolean $|P| \times |P|$ matrix, that has a 1 on position $(i,i')$, if and only
  if $\delta^*(q_i,w)=q_{i'}$. By the pigeonhole principle, there have to be $j$
  and $k$ such that $0 \leq j < k \leq 2^{|P|^2}$ and $M^j = M^k$. We now shorten
  $p$ by $k-j$ copies of $w$ and call the resulting graph $B'$. It is obvious
  that $Q_1$ can still embed into $B'$. We have to show that $Q_2$ cannot embed into $B'$. Towards a
  contradiction we assume that $h$ is a satisfying homomorphism from $Q_2$ to $B'$. Let $p'$ be a subpath of the path $p$ that spans at least $j$ copies of $w$ such
  that no node of $p'$ occurs in the image of $h$. Such a subpath exists due to
  the length of $p$ and the fact that the sizes of $|P|$ and the image of $h$
  are both bounded by $|Q_2|$. We now insert $k-j$ copies of $w$ into $p'$. By definition
  of $M$ and the fact that $M^j=M^k$, we have that $h$ is also a satisfying homomorphism from $Q_2$ to $B$, the desired contradiction.

  We note that the minimal model property implies that the smallest counter
  examples can be stored using only polynomial space by storing the
  multiplicities of strings in binary. The $\pitwo$-algorithm universally
  guesses such a polynomial size representation of a canonical model $B$ of
  $Q_1$. Then it tests whether there exists an homomorphism from $Q_2$ into $B$
  by guessing an embedding. Testing whether a guessed mapping is indeed a
  satisfying homomorphism can be done in polynomial time using the method of fast squaring
  to compute any necessary $\delta^*(q,w^i)$.
\end{proof}

Next we show that, if we assume a \textsl{finite} set of edge labels $\Gamma$ for
knowledge graphs, the containment problem of $\crpq(W)$ is not just \pspace-hard
(as Deutsch and Tannen showed), but even \expspace-complete. The important
technical difference with Theorem~\ref{theo:w-in-crpq} is that, when the labeling alphabet
$\Gamma$ is finite, it is not always possible to replace occurrences of the
wildcard $\_$ with a fresh symbol that doesn't appear in either query.
Therefore, the counterexamples cannot be stored in a compact way.
Even though this is a different setting than all the other results in the paper,
we provide a proof, because the problem was left open by \citet{DeutschT-dbpl01}.

\begin{proposition}\label{prop:w}
  If edge labels of knowledge bases come from a finite alphabet $\Gamma$, then
  containment of $\crpq(W)$ in $\crpq(W)$ is \expspace-complete.
\end{proposition}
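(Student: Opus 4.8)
The upper bound is immediate. Since $\Gamma$ is finite, every occurrence of the wildcard $\_$ in a $\crpq(W)$ query may be replaced by the disjunction $\bigvee_{\sigma\in\Gamma}\sigma$ without changing the answers over any knowledge base whose labels come from $\Gamma$; this turns $Q_1$ and $Q_2$ into ordinary \crpqs, and since canonical models of the modified $Q_1$ only use symbols occurring in its regular expressions (hence in $\Gamma$), containment over $\Gamma$-labelled knowledge bases coincides with unrestricted \crpq containment, which is in \expspace by \citet{CalvaneseGLV-kr00}. This replacement is exactly what is unavailable over an infinite alphabet, which is why there one instead obtains the small-model argument of Theorem~\ref{theo:w-in-crpq}.

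For the lower bound I would reduce from the exponential-width corridor tiling problem, adapting the two queries built in the proof of Theorem~\ref{theo:a-Astar}. The single fact about finite alphabets that I use is that $L(\_^*)=\Gamma^*$, so the W-fragment offers a Kleene star over the \emph{whole} alphabet; dually, although the W-fragment forbids disjunctions \emph{below} a star, it allows arbitrary sums and concatenations of single symbols \emph{above} stars, so all the ``bad patterns'' of that earlier construction remain expressible except for their proper-subset stars. Concretely: fix $\Gamma$ to be the fixed alphabet $\mathbb{A}$ used there; keep every single-symbol atom of $Q_1$, including all self-loops (whose symbol sets we may choose freely, and which still serve as ``parking'' positions); replace the unique starred atom $z_4\,\mathbb{B}^*\,z_5$ of $Q_1$ by $z_4\,\_^*\,z_5$; and in $Q_2$ replace every proper-subset Kleene star ($\mathbb{B}^*$ in the path patterns, and $\mathbb{B}_{\overline{\$}}^*$, $\{0,1\}^*$, $\{0,1,\#\}^*$, $\{\tileN,\tileY\}^*$, and the subset stars inside $L$) by $\_^*$. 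The single-symbol stars $\sigma^*$ and $b^*$ are already W-fragment atoms. One checks that both queries stay in $\crpq(W)$, since $\_^*$ and finite sums and concatenations of single symbols are all W-fragment expressions.

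The main obstacle is that relaxing the subset stars to $\Gamma^*$ loosens what the queries accept, and this must be compensated. The easy part: a canonical model of $Q_1$ can now carry helper symbols ($[$, $]$, $b$, $\star$, and extra occurrences of $\langle$, $\rangle$) inside the region encoding the tiling; after arranging that $\langle$ and $\rangle$ occur in $Q_1$ only on the two designated edges, this is caught by adding to $Q_2$ one W-atom $\langle\cdot\_^*\cdot c\cdot\_^*\cdot\rangle$ for each forbidden $c$, and the ``wrap-around'' step that the earlier proof performed via $\rangle\langle$ is rerouted explicitly through the $b$-back-edge of $Q_1$, i.e.\ via $\_^*\,\rangle\,b\,\langle\,\_^*$. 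The hard part is the vertical-constraint gadget $G_i^{t,t'}$: it uses the subexpression $\mathbb{B}_{\overline{\$}}^*\,\$\,\mathbb{B}_{\overline{\$}}^*$ to certify that the two tiles it compares lie in \emph{consecutive} rows, and ``crossing exactly one $\$$'' is not expressible by a single W-atom. The plan is to split that atom into a chain of W-atoms through fresh variables and to pin the intermediate variable (the one that should land just past the row separator) using mutually inverse $L$-paths --- the same synchronisation device the original proof uses to align the $x$- and $y$-variables --- now exploiting the cyclic shape of $Q_1$'s canonical models (via the $b$-edge) together with the well-formedness already enforced by the other bad patterns. The delicate point I expect to spend the most effort on is showing that these relaxations, in the presence of the $b$-back-edge, never let a bad pattern ``skip'' a row; once the gadget is shown to embed, on well-formed canonical models, exactly when some vertically adjacent pair of tiles violates $V$, the correctness of the reduction follows the case analysis of Theorem~\ref{theo:a-Astar}.
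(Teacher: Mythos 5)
Your upper bound is fine and is the standard argument. For the lower bound you correctly aim at adapting the exponential-width corridor tiling reduction of Theorem~\ref{theo:a-Astar}, but the mechanism you choose for fitting its regular expressions into the W-fragment --- relaxing every proper-subset Kleene star to $\_^*$ and compensating afterwards --- breaks the reduction in ways that your proposed repairs do not touch. First, locality is lost inside the tiling encoding: the horizontal-error pattern $\hat t\cdot\{0,1,\#\}^*\cdot\hat{t'}$ becomes $\hat t\cdot\_^*\cdot\hat{t'}$, which only asserts that $t$ occurs somewhere before $t'$, so it embeds into encodings of perfectly valid tilings; the address-increment patterns using $\{0,1\}^*$ and $\{\tileN,\tileY\}^*$ fail the same way. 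Second, and worse, the $\_^*$ segments of a bad pattern can wrap around the cycle $z_3\to z_4\to\cdots\to z_5\to z_6\xrightarrow{b}z_3$ and spell the required factor $w$ entirely on the self-loops of $z_3$, so a pattern such as $\langle\,\_^*\,w\,\_^*\,\rangle$ embeds from $z_3$ to $z_6$ in \emph{every} canonical model, making $Q_1\subseteq Q_2$ hold unconditionally. Third, the synchronisation language $L=b^*\mathbb{B}_{\langle\rangle}^*b^*$, which you explicitly plan to reuse to repair the vertical gadget, is itself destroyed by your own replacement: $b^*\_^*b^*$ denotes all of $\Gamma^*$, and since the $b$-back-edge makes the whole middle part strongly connected, mutual $\_^*$-reachability constrains nothing. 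The one step you flag as delicate (``crossing exactly one $\$$'') therefore rests on a device that no longer exists, and is left incomplete even on your own account.

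The idea you are missing is to keep the subset stars \emph{exact} rather than weaken them, by re-encoding the alphabet. The paper's proof applies a homomorphism that maps each symbol $\sigma$ to a short block $\sigma\sigma_1\sigma_2$ whose tag symbols $\sigma_1,\sigma_2\in\{\tiley,\tilen\}$ record membership of $\sigma$ in $\mathbb{B}$ and in $\mathbb{B}_{\overline{\$}}$. Under this encoding $\mathbb{B}^*$ becomes $(\_\,\_\,\tiley)^*$ and $\mathbb{B}_{\overline{\$}}^*$ becomes $(\_\,\tiley\,\tiley)^*$: stars over concatenations of single symbols and wildcards, which the production $S\to\sigma\mid\_\mid S\cdot S$ places inside the W-fragment, and which denote exactly the intended subsets on correctly tagged strings (fixed-length blocks such as addresses need no star at all, since finite sums and concatenations are allowed outside stars). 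The only new obligation is a handful of additional bad patterns policing that the starred edge of $Q_1$ produces only triples in the image of the homomorphism. With that, the bad patterns, the language $L$, and the single-$\$$ test in $G_i^{t,t'}$ all carry over with their original semantics, and the correctness argument of Theorem~\ref{theo:a-Astar} applies unchanged.
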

\begin{proof}
	To avoid confusion with an infinite alphabet, we write $\Gamma$ instead of
  $\_$.
    We change the languages used in the proof of
  Theorem~\ref{theo:a-Astar}.
  We apply the following homomorphism $h$ to all single label languages of $Q_1$
  and $Q_2$ (including the languages resulting from the double-self-loops in
  Figure~\ref{fig:lower-A-Astar:q1}): $\# \mapsto \varepsilon$, $\$ \mapsto
  \$\tilen\tiley$, $\sigma \mapsto \sigma \tiley\tiley$ for $\sigma \in \mathbb{B} \setminus
  \{\$,\#\}$, and $\sigma \mapsto \sigma \tilen\tilen \in \mathbb{A} \setminus
  \mathbb{B}$, where $\tiley$ and $\tilen$ are new
  symbols, i.e., we encode every symbol $\sigma$ of our original construction by
  the three symbols $\sigma\sigma_1\sigma_2$, where $\sigma_1,\sigma_2 \in \{\tiley,\tilen\}$
  encode whether $\sigma$ belongs to $\mathbb{B}$ and $\mathbb{B}_{\overline
    \$}$, respectively.

  We replace every occurrence of $\mathbb{B}^*$ with the language $(\Gamma \Gamma \tiley)^*$
  and every occurrence of $\mathbb{B}^*_{\overline \$}$ with the language $(\Gamma \tiley
  \tiley)^*$. We replace $e_i^a$ as used in Figure~\ref{fig:lower-A-Astar:gi} with
  $({(0+1)}\tiley\tiley)^{i-1}a\tiley\tiley((0+1)\tiley \tiley)^{n-i-1}$.

  The last change is that we add further bad patterns to the construction of
  $Q_2$ that detects whenever the language $(\Gamma \Gamma \tiley)^*$ resulting from the
  $\mathbb{B}^*$ in $Q_1$ produces an invalid pattern, i.e., a triple that is not
  in the image of $h$. 
\end{proof}

\makeatletter{}\section{Related Work}\label{sec:relatedwork}

The most relevant work to us is that of \citet{CalvaneseGLV-kr00}, who proved
that containment for conjunctive regular path queries, with or without inverses,
is \expspace-complete, generalizing the \expspace upper bound for CRPQs of
\citet{FlorescuLS-pods98}.

\citet{DeutschT-dbpl01} have also studied the containment problem for CRPQ with
restricted classes of regular expressions. They chose fragments of regular
expressions based on expressions in query languages for XML, such as StruQL,
XML-QL, and XPath.
 The fragments they propose are orthogonal to the ones we study here. This is because they allow wildcards and union of words as long as they are not under a Kleene star, while we disallow wildcards and allow union of letters under Kleene star. 
Concretely, they allow $(aa + b)$, which we forbid. On the
other hand, their fragments $(*,_,l^*,|)$
and W do not allow unions under
Kleene star, i.e., they cannot express $(a+b)^*$. Their fragments $Z$ and full
CRPQs allow unions under Kleene star, but are already
\expspace-complete. 
\citet{FlorescuLS-pods98} studied a fragment of conjunctive regular path queries
with wildcards for which the containment problem is \np-complete---thus, it has
the same complexity as containment for conjunctive queries. In their fragment,
they only allow single symbols, transitive closure over wildcards, and
concatenations thereof.

\citet{MiklauS-jacm04} were the first to investigate containment and
satisfiability of \emph{tree pattern queries}, which are acyclic versions of the
\crpqs studied by \citet{FlorescuLS-pods98}. Tree pattern queries are primarily
considered on tree-structured data, but the complexity of their containment
remains the same if one allows graph-structured data
\citep{MiklauS-jacm04,CzerwinskiMNP-jacm18}. Containment of tree pattern queries
was considered in various forms in
\citep{MiklauS-jacm04,NevenS-lmcs06,Wood-icdt03,CzerwinskiMPP-pods15}. 

\citet{BjorklundMS-jcss10} studied containment of conjunctive queries over
tree-structured data and and proved a trichotomy, classifying the problems as in
\ptime, \conp-complete, or \pitwo-complete. Their results cannot be lifted to
general graphs since they use that, if a child has two direct ancestors, then
they must be identical.

\citet{SagivY-jacm1980} studied the equivalence and therefore the containment
problem of relational expressions with query optimization in mind. They show
that when select, project, join, and union operators are allowed, containment is
\pitwo-complete.

\citet{ChekuriR-tcs00} showed that containment of conjunctive queries is in
\ptime when the right-hand side has bounded treewidth. More precisely, they give
an algorithm that runs in $(|Q_1|+|Q_2|)^k$, where $k$ is the width of $Q_2$. So
their algorithm especially works for acyclic queries.

\citet{CalvaneseGLV-dblp01} provide a \pspace-algorithm for containment of
tree-shaped CRPQs with inverses. The algorithm also works if only the right-hand
side is tree-shaped. \citet{Figueira-hal19} shows that containment of UC2RPQs is
in \pspace if the class of graphs considered has ``bounded bridgewidth'' (= size
of minimal edge separator is bounded) and is \expspace-complete otherwise.
\citet{BarceloFR-icalp19} studied the boundedness problem of UC2RPQs and prove
that its \expspace-completeness already holds for CRPQs. (A UC2RPQ is bounded if
it is equivalent to a union of conjunctive queries.)

The practical study of \citep{BonifatiMT-www19} that we mentioned in the
beginning of the paper and that was crucial for the motivation of this work
would not have been possible without the efforts of the Dresden group on
Knowledge-Based Systems \citep{MalyshevKGGB18}, who made sure that anonymized query logs from Wikidata
could be released. \citet{BonifatiMT-www19} studied the same log files as \citet{BielefeldtGK18}.

It should be noted that several extensions and variants of CRPQs have been
studied in the literature. Notable examples are nested regular expressions
\citep{PerezAG-jws10}, CRPQs with node- and edge-variables \citep{BarceloLR-jacm14}, regular
queries \citep{ReutterV15}, and GXPath \citep{LibkinMV16}.

\makeatletter{}\section{Conclusions and Further Work}\label{sec:conclusions}
We have provided an overview of the complexity of CRPQ containment in the case
where the regular expressions in queries come from restricted, yet widely used
classes in practice. A first main result is that, in the case that transitive
closures are only allowed over single symbols, the complexity of \crpq
containment drops significantly. Second, we have shown that even when the
regular expressions are from the restricted class $\crpq(a,A^*)$, the
containment problem remains \expspace-hard. However, contrary to the lower bound
reduction of \citet{CalvaneseGLV-kr00}, the shape of queries (i.e., its
underlying graph) is quite involved, and it crucially involves cycles. This
immediately raises a number of questions.
\begin{itemize}
\item What is the complexity of \querycon of $\crpq(a,A^*)$ in $\crpq(a,A^*)$
  if one of the sides is only a path or a DAG? 
	\item If one takes a careful look at our results, we actually settle the
  complexity of all forms of containment $\cF_1 \subseteq \cF_2$ where $\cF_i$
  is one of our considered classes, except the cases of \querycon of
  $\crpq(a,A^*)$ in $\crpq(A)$ and \querycon of $\crpq(a,A^*)$ in
  $\crpq(a,a^*)$. What is the complexity in these cases?
\end{itemize}

Of course, it would be interesting to understand which of our results can be
extended towards C2RPQs, which would slightly increase the coverage of the
queries we consider in Table~\ref{tab:complexity-smaller}. We believe that all
our upper bounds can be extended and we plan to incorporate these results in an
extended version of the paper.

Another direction could be to combine our fragments with arithmetic constraints.
There is a lot of work done considering query containment of conjunctive queries
with arithmetic constraints (which is \pitwo-complete), see for example
\citet{Afrati-ideas19} and the related work mentioned there. We would like to
understand to which extent such constraints can be incororated without
increasing the complexity of containment.

It would also be interesting to investigate the problem of \emph{boundedness} \citep{BarceloFR-icalp19} 
for the
studied classes of $\crpq$; understanding whether a query is `local' might be of
interest for the graph exploration during its evaluation.

%%% End:

 {\small
 \bibliography{references}
 }

\onecolumn
\newpage

\appendix

\end{document}